\newif\ifispreprint
\newcommand{\dist}{\mathsf{d}}
\newcommand{\f}{\mathtt{f}}
\newcommand{\F}{\mathtt{F}}
\def\R{\mathbb{R}}
 \DeclareMathOperator{\CosSim}{\mathsf{\small CosSim}}
\newtheoremstyle{spacedplain}
  {5.5pt}   
  {5.5pt}   
  {\itshape}  
  {}      
  {\bfseries} 
  {.}     
  {.5em}  
  {}      
\theoremstyle{spacedplain}
\newtheorem{thm}{\protect\theoremname}
\theoremstyle{plain}
\newtheorem{lem}{\protect\lemmaname}
\theoremstyle{spacedplain}
\newtheorem{prop}{\protect\propname}
\providecommand{\lemmaname}{Lemma}
\providecommand{\theoremname}{Theorem}
\providecommand{\propname}{Proposition}
\theoremstyle{spacedplain}
\newtheorem{definition}{Definition}
\newtheoremstyle{hyp}
  {5.5pt}
  {}
  {\it}
  {}
  {\bfseries}
  {.}
  { }
  {\thmname{#1}\thmnumber{ #2}\thmnote{ (#3)}}%
\theoremstyle{hyp}
\newtheorem{hypothesis}{Hypothesis}
\title{The Origins of Representation Manifolds in Large Language Models}
\author{%
  Alexander Modell \\
  Department of Mathematics\\
  Imperial College London\\
  \texttt{a.modell@imperial.ac.uk} \\
  \And
  Patrick Rubin-Delanchy\\
  School of Mathematics \\
  University of Edinburgh \\
  \texttt{prd@ed.ac.uk}
  \And
  \hspace{-0.4em}Nick Whiteley\hspace{-0.6em} \\
  \hspace{-0.4em}School of Mathematics\hspace{-0.6em} \\
  \hspace{-0.4em}University of Bristol\hspace{-0.6em} \\
  \hspace{-0.4em}\texttt{nick.whiteley@bristol.ac.uk}
  \hspace{-0.6em}
}
\begin{document}

\maketitle

\begin{abstract}
There is a large ongoing scientific effort in mechanistic interpretability to map embeddings and internal representations of AI systems into human-understandable concepts.
A key element of this effort is the linear representation hypothesis, which posits that neural representations are sparse linear combinations of `almost-orthogonal' direction vectors, reflecting the presence or absence of different features. This model underpins the use of sparse autoencoders to recover features from representations.
Moving towards a fuller model of features, in which neural representations could encode not just the presence but also a potentially continuous and multidimensional value for a feature, has been a subject of intense recent discourse.
We describe why and how a feature might be represented as a manifold, demonstrating in particular that cosine similarity in representation space may encode the intrinsic geometry of a feature through shortest, on-manifold paths, potentially answering the question of how distance in representation space and relatedness in concept space could be connected.  
The critical assumptions and predictions of the theory are validated on text embeddings and token activations of large language models.

\end{abstract}

\section{Introduction}

There is a large, ongoing, scientific effort in mechanistic interpretability to map internal representations used by AI systems into human-understandable concepts \citep{neuronpedia, templeton_scaling_2024}, with broad implications for humanity including safety, alignment, and the future role of AI in science 
\citep{bostrom_superintelligence_2014, soares_agent_2017, wang_scientific_2023}.

A key element of this effort is the linear representation hypothesis (LRH), which posits that language models represent human-interpretable features as directions in representation space, and that model representations are (literally) a sparse linear combination of these directions 
\citep{smolensky_tensor_1990, arora_linear_2018, elhage_toy_2022}. 
The methodology of sparse autoencoders (SAEs) \citep{elhage_toy_2022, bricken_towards_2023} employs ideas from sparse coding \citep{elad_sparse_2010} to estimate a dictionary of these directions from representations. 

This model and methodology reflect a radical goal of breaking representations down into basic, irreducible, atomic concepts which are meaningfully only described as present or absent \citep{cunningham_sparse_2023, bricken_towards_2023, templeton_scaling_2024}. Commonly cited examples are features such as {\tt floppy\_ears}, {\tt Eiffel\_Tower}, or {\tt is\_Arabic}, the presence of which it would presumably be useful for an algorithm to infer (corresponding e.g. to cat/dog classification, the topic of a question, the language of a query).

It is generally accepted that this breakdown of representation space into purely atomic features does not tell the whole story \citep{smith_strong_2024,mendel_sae_2024, bussmann_showing_2024, olah_what_2024, engels_not_2025}. There is overwhelming empirical evidence that neural networks represent complex features in structures which unfold across multiple directions in potentially continuous, nonlinear ways: examples of curves \citep{hanna_functional_2023, chang_geometry_2022}, swiss-roll-like manifolds \citep{cai_isotropy_2021}, loops \citep{engels_not_2025, gorton_curve_2024}, tori \citep{chang_geometry_2022}, hierarchical trees \citep{park_geometry_2024} in real language models;
topologically circular representations of numbers in toy models trained to perform modular arithmetic \citep{liu_towards_2022, nanda_progress_2023, zhong_clock_2023, he_learning_2024} or simulated angular data \citep{olah_feature_2024}, fractal geometry in simulated hidden Markov models \citep{shai_transformers_2024}; and broader phenomenology from local finite-state-automata \citep{bricken_towards_2023}, to spatial `brain-like' modularity \citep{li_geometry_2025}, to behaviour, such as deception \citep{templeton_scaling_2024}.

SAEs are not made defunct by these discoveries, and in fact have often facilitated them through recombination of SAE directions \citep{bussmann_learning_2025, engels_not_2025}. The LRH has been extended to allow this more flexible interpretation of the output of SAEs:
\begin{definition}[Multidimensional linear representation hypothesis]
There exists a collection of features labeled $\f \in \F$ and associated subspaces $V_{\f} \in \R^D$ such that the functional relationship between an input $x \in \*X$ and its representation $\Psi(x)$ is 
\begin{equation}
\label{eq:general_LRH}
    \Psi(x) = \sum_{\f \in \F(x)}\rho_{\f}(x)v_\f(x), \qquad v_\f(x) \in  V_\f \text{ and } \|v_\f(x)\|_2 = 1, 
\end{equation}
where $\rho_\f(x)$ is a non-negative scaling denoting the presence of the feature $\f$ in $x$, and $\F(x) = \{\f \: : \: \rho_{\f}(x) > 0\}$ is the set of features which are present in $x$. 
\label{hyp:MDSH}
\end{definition}
The standard LRH corresponds to the case where $v_\f(x)$ is constant in $x$ (and $V_\f$ one-dimensional), and the extension above is a slightly relaxed and reparametrised version of that which appears in \citet{engels_not_2025}. 

Our paper concerns the representation of a feature $\f$ as a \emph{manifold} in $V_\f$, a phenomenon which is widely observed and intensely deliberated in the mechanistic interpretability community \citep{olah_feature_2024, olah_what_2024, gorton_curve_2024, engels_not_2025}. Despite numerous accounts (cited above) of a manifold clearly corresponding to some underlying ground truth feature (which may even be known exactly, e.g. in simulated data), there is no general description of this correspondence.

We provide what we believe is a minimum viable mathematical theory to do this. Our most substantial, novel result establishes that under plausible hypotheses, cosine similarity in representation space encodes the intrinsic geometry of a feature through shortest, on-manifold paths. We develop this insight using concepts from \emph{metric geometry} -- the theory of length and shape in metric spaces \citep{burago_course_2001}. The widespread use of cosine similarity across data science could suggest many other applications for this result. 

More generally, our work provides a (hopefully) accessible explanation of why manifold structure might arise in representation space, how its topology and geometry might reflect a human conceptualisation of the feature, and suggests some simple diagnostic plots and  statistical checks to explore critical assumptions and predictions of the theory.

Related to the problem of mechanistic interpretability, there is enormous interest in the use of \emph{text embeddings} \citep{li_sentence_2020, gao_simcse_2021}, which several entities now provide as a service, to support, for example, receiver augmented generation, search, recommendation, visualisation and classification. Again, these are representations which are usually unit vectors, and about which the cosine similarity is said to provide an effective measure of semantic similarity. The bulk of our results are also applicable to this area; in what follows, $\Psi(x)$ can be viewed as a generic representation of some input $x$, and $v_{\f}(x)$ some unit-norm representation of a feature in $x$.

\vspace{0.5em}

\section{The continuous correspondence hypothesis}

\subsection{What is a feature anyway?}

Before we begin our discussion on how features are geometrically represented in language models, we really ought to pin down what exactly we mean by \emph{``a feature''}.

\begin{definition}
    A \emph{feature}, labeled $\f$, is a metric space $(\*Z_{\f}, \dist_{\f})$.
\end{definition}
A metric space is simply a set equipped with a distance, and we find that it provides a simple yet highly expressive formal mathematical framework for discussing the abstract notion of a feature or concept. In particular, it allows us to readily talk about:
\begin{enumerate}
    \item \emph{Atomic features: } $\*Z_{\f}$ a singleton set.
    \item \emph{Hierarchical features: } $\*Z_{\f}$ a discrete set, $\dist_{\f}$ a tree distance.
    \item \emph{Continuous features: } $\*Z_{\f}$ an interval (equipped with e.g. $\dist_{\f}(x,y) = |x-y|)$, a circle (equipped with e.g. arc-length distance), multi-dimensional (equipped with e.g. the Euclidean distance), etc. 
\end{enumerate}

\begin{figure}[t]
  \centering
  \begin{minipage}[b]{0.32\linewidth}    
    \begin{minipage}[t]{\linewidth}       
      \centering
      \includegraphics[scale=0.08]{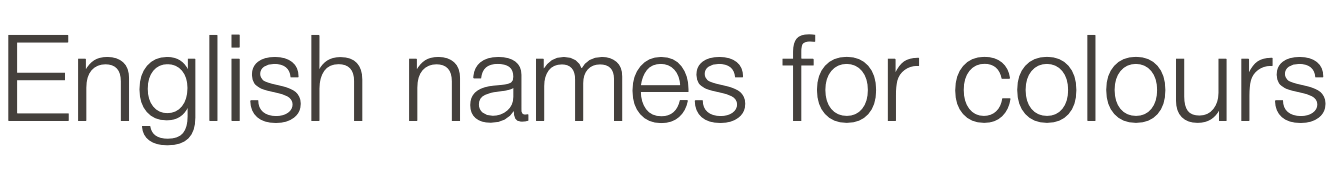}
      \\
      \includegraphics[scale=0.06]{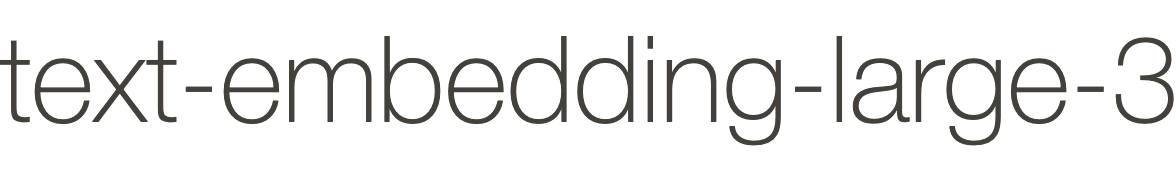}
      \\[-0.5ex]
      \includegraphics[width=\linewidth, trim={2.5cm 4.5cm 1.5cm 0.5cm}, clip]{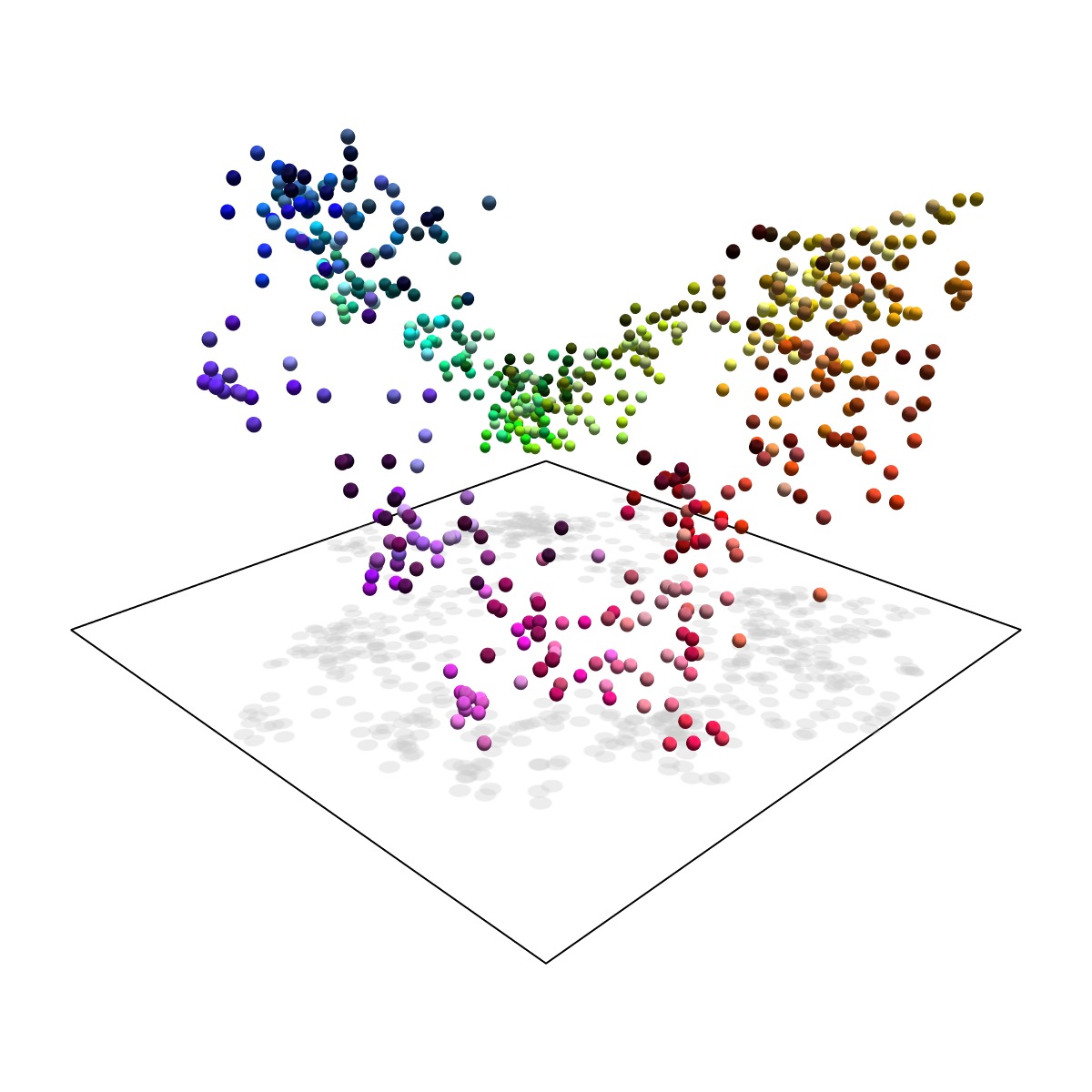}\\[1ex]

      \includegraphics[width=\linewidth, trim={1cm 3.5cm 0cm 5cm}, clip]{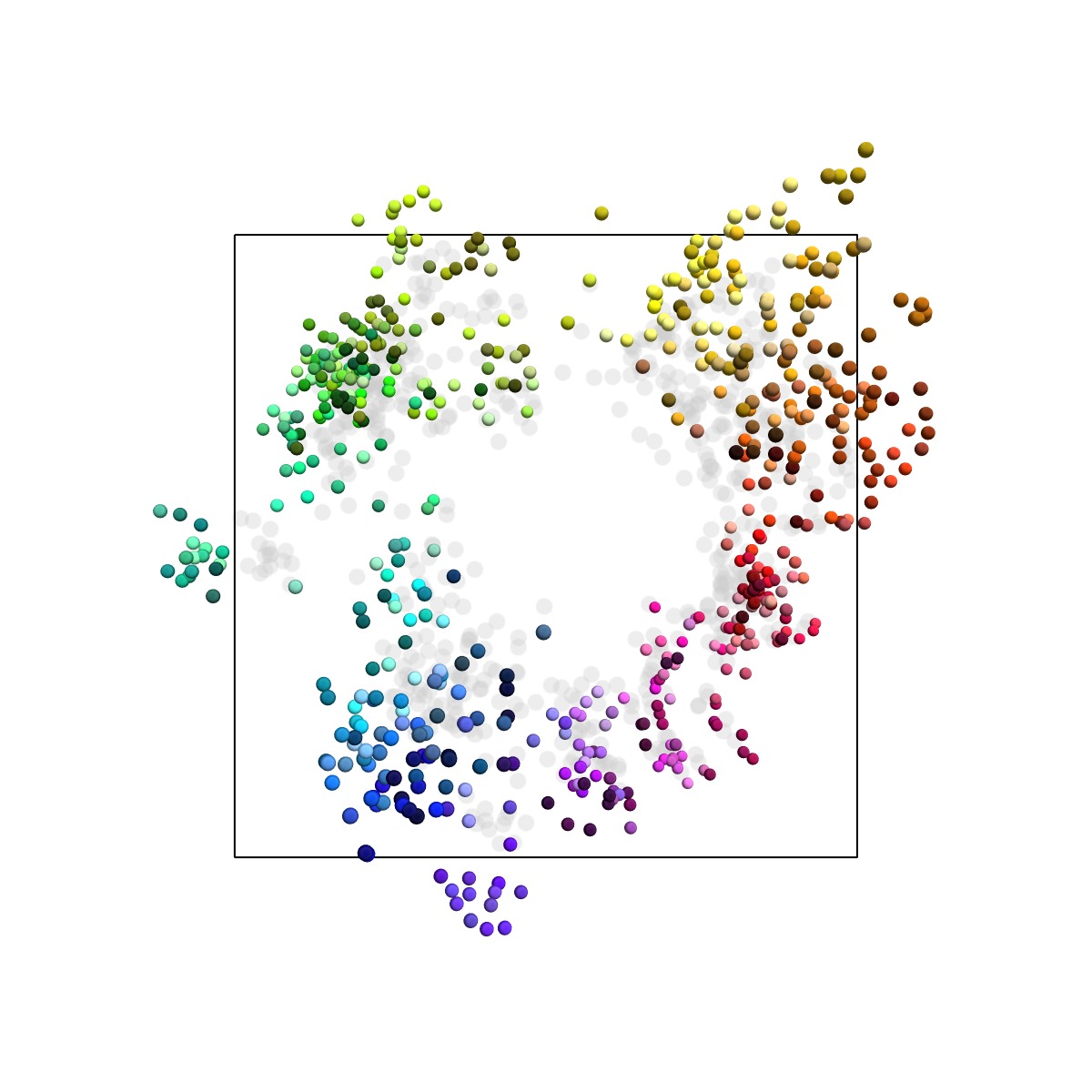}
    \end{minipage}
  \end{minipage}%
  \hfill
  \begin{minipage}[b]{0.32\linewidth}
    \begin{minipage}[t]{\linewidth}
      \centering
      \includegraphics[scale=0.08]{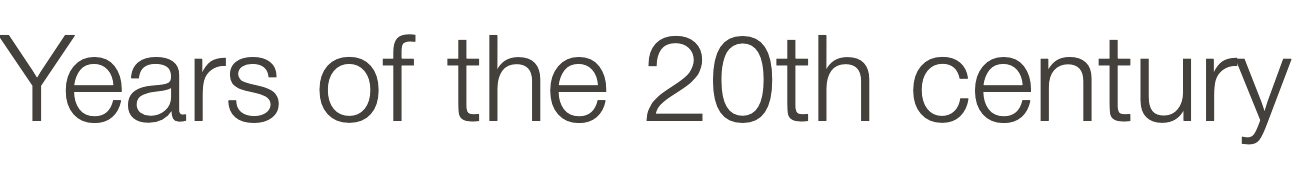}
      \\
      \includegraphics[scale=0.06]{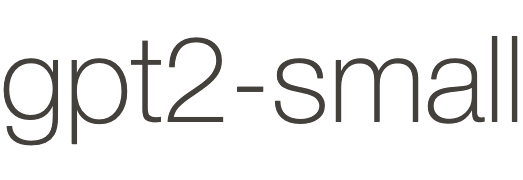}
      \\[-0ex]
      \includegraphics[width=\linewidth, trim={1.5cm 4.5cm 1.5cm 0.5cm}, clip]{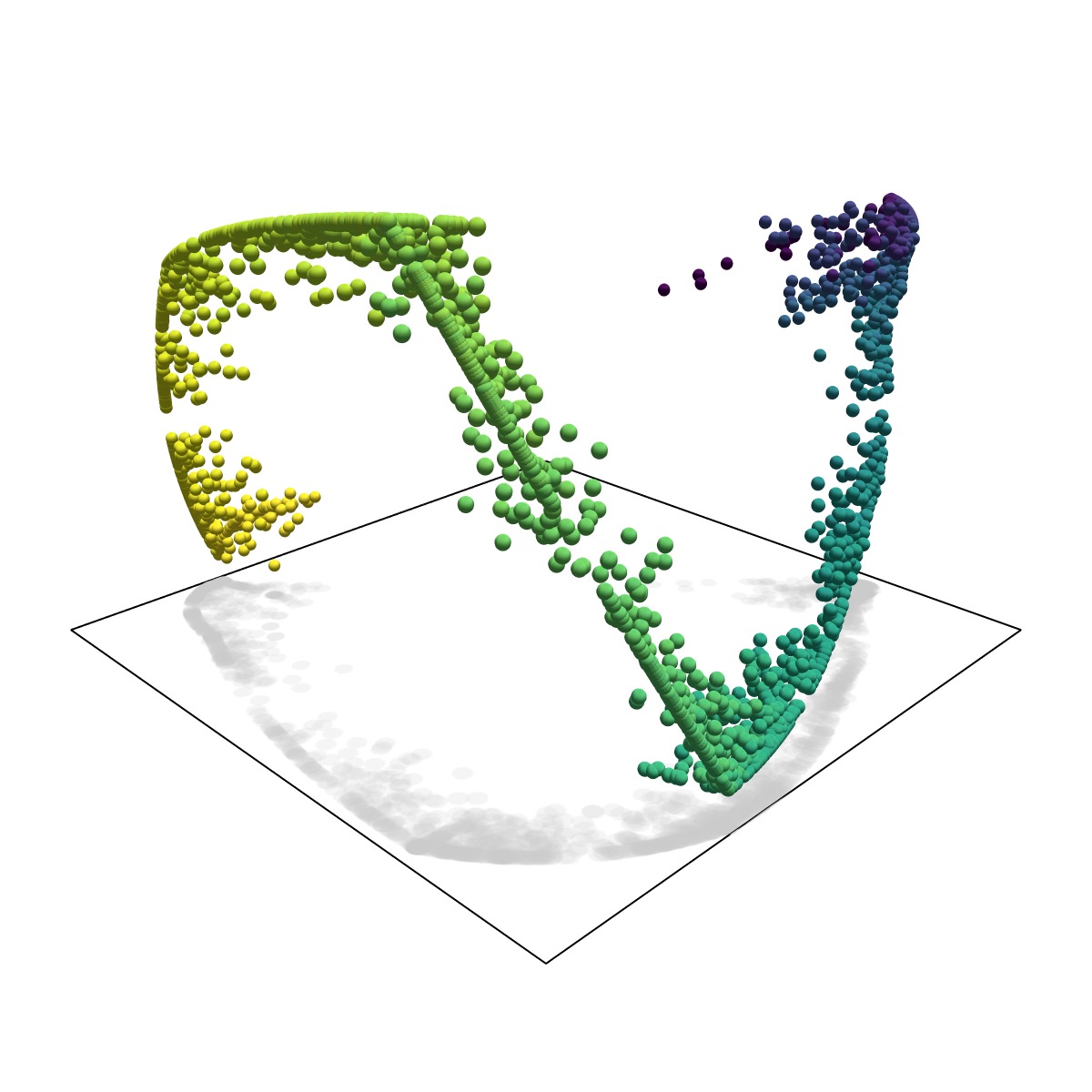}\\[1ex]
      \includegraphics[width=\linewidth, trim={0cm 3.5cm 0cm 5cm}, clip]{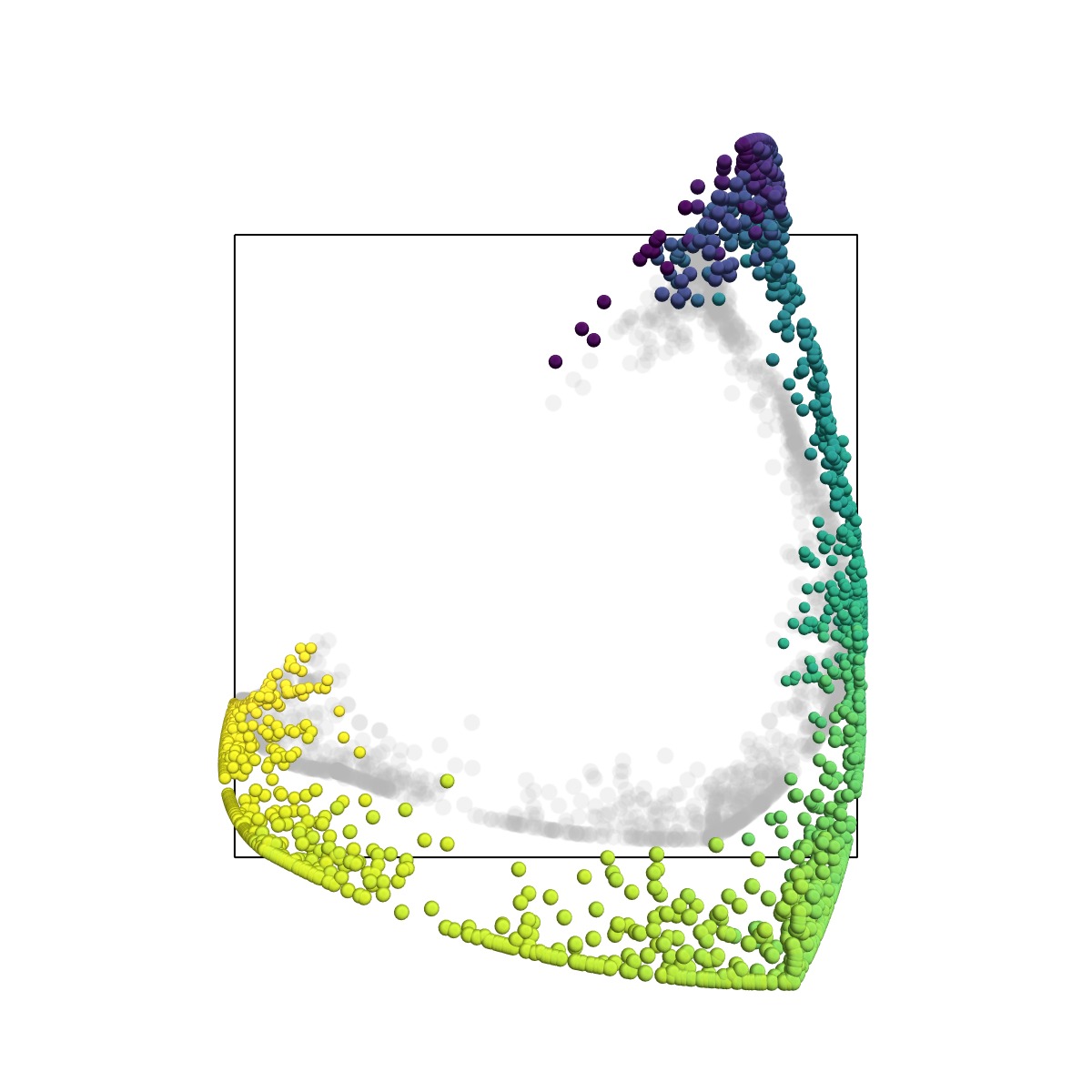}
    \end{minipage}
  \end{minipage}%
  \hfill
  \begin{minipage}[b]{0.32\linewidth}
    \begin{minipage}[t]{\linewidth}
      \centering
      \includegraphics[scale=0.08]{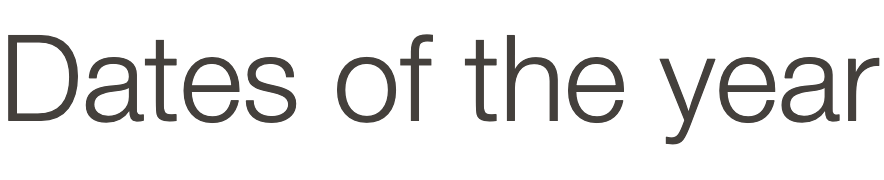}
      \\
      \includegraphics[scale=0.06]{images/text-embedding-large-3.png}
      \\[-0.5ex]
      \includegraphics[width=\linewidth, trim={1.5cm 4.5cm 2.5cm 0.5cm}, clip]{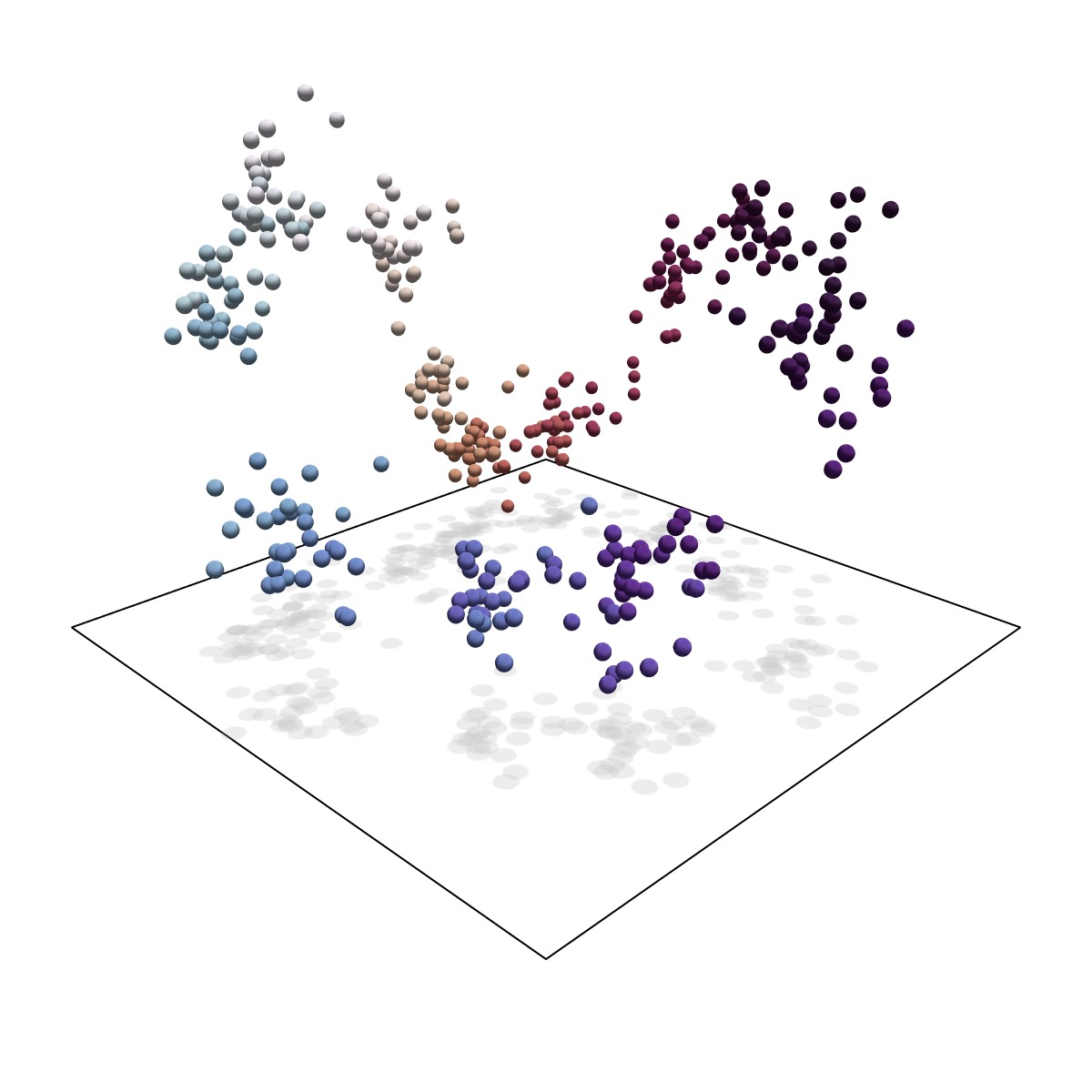}\\[1ex]
      \includegraphics[width=\linewidth, trim={0cm 3.5cm 1cm 5cm}, clip]{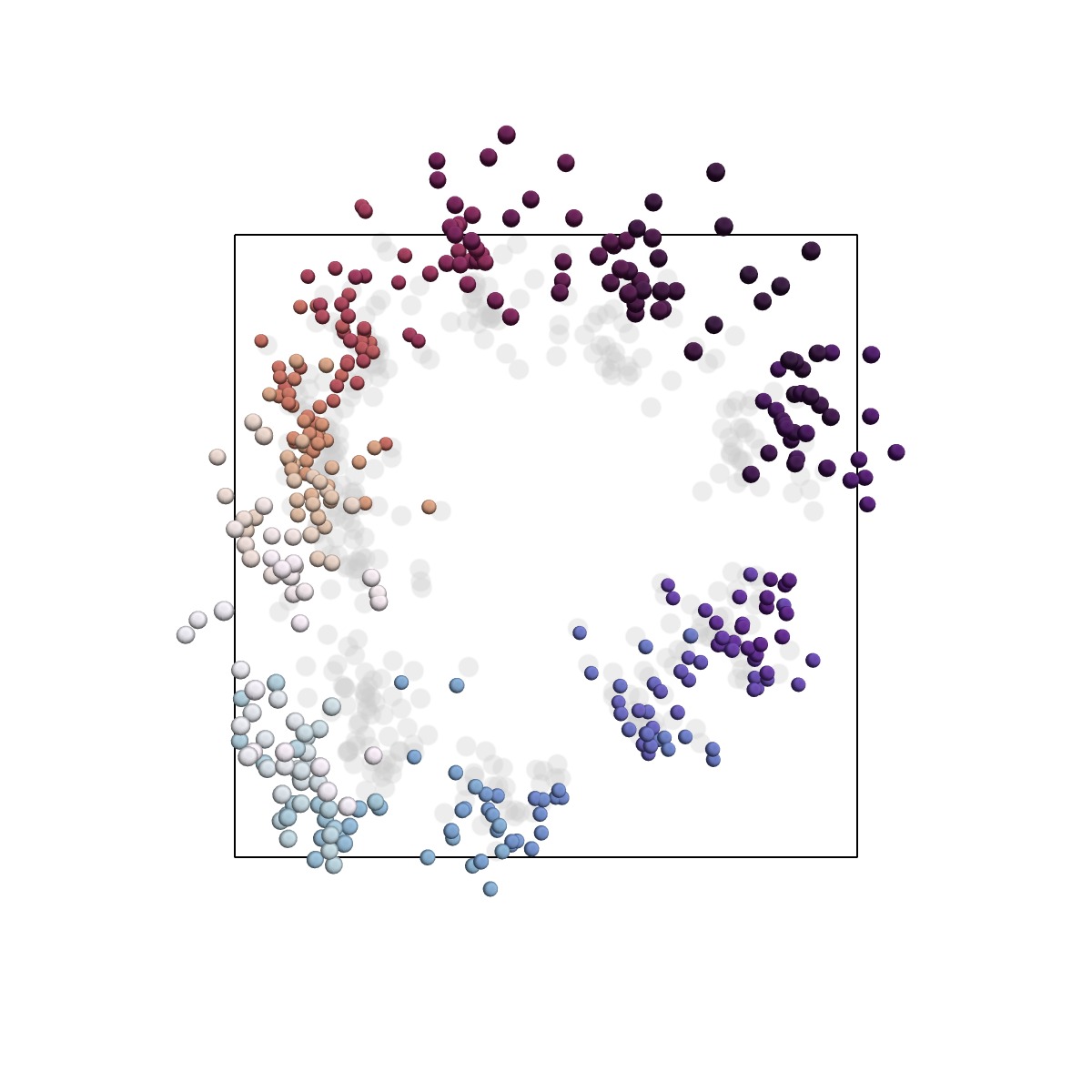}
    \end{minipage}
  \end{minipage}

  \caption{Representation manifolds in large language models: colours, years and dates. The first and third example show text embeddings obtained from OpenAI's {\tt text-embedding-large-3} model from prompts relating to English names for colours and dates of the year, respectivly. The second example shows token activations from layer 7 of {\tt GPT2-small}, which were studied in \citet{engels_not_2025}. The token activations were processed via an SAE to extract a feature corresponding to years of the twentieth century as in \citet{engels_not_2025}, and normalized to have norm one. For each example, we perform principal component analysis (PCA) to reduce the dimension to three and display the resulting point clouds from two perspectives. The embeddings of English names for colours are displayed in their respective colour value. Years are coloured from blue (1900) through green to yellow (1999), and dates are coloured from white (1st Janurary) through blue to black (1st July) through red and back to white.}
  \label{fig:manifolds_1}
\end{figure}

We find that this formalism strikes a balance between the less expressive Euclidean and hyperspherical models often assumed in the learning theory literature \citep[e.g.][]{zimmermann_contrastive_2021, hyvarinen_identifiability_2024, reizinger_cross-entropy_2025}, and the more complicated and less accessible models which are often assumed in the disentanglement literature, such as Riemannian manifolds equipped with group structure \citep[e.g.][]{higgins_towards_2018, pfau_disentangling_2020}.

For any input $x$ on which the feature $\f$ is present (i.e. for which $\rho_n(x) > 0$), we assume the existence of a value $z_\f(x)$ which the input takes in $\*Z_\f$. For example, if the feature {\tt colour} is present in an input $x$, then $\rho_{\tt colour}(x) > 0$ and $z_\f(x)$ might take a value describing the precise hue, saturation and lightness of that colour.

As a final note, we will assume throughout this paper that each $\*Z_\f$ is a compact set or, loosely speaking, ``closed and bounded'': a standard assumption in manifold learning which avoids considerable and possibly distracting theoretical complications.

\subsection{The continuous correspondence hypothesis}

Given the multidimensional linear representation hypothesis, and our definition of a feature, perhaps the most basic hypothesis that one can make is that there is \emph{some} way of matching the representation directions $v_\f(x)$ to the abstract features $z_\f(x)$.

\begin{hypothesis}[continuous correspondence]
    The features $z_\f(x)$ and representation directions $v_\f(x)$ are in a continuous, one-to-one correspondence. Formally, there is a continuous invertible map from the metric space into the hypersphere, $\phi_\f: \*Z_\f \rightarrow \mathbb{S}^{D-1}$, with image $\*M_\f \coloneqq \phi(\*Z)$,  such that $v_\f(x) = \phi_\f(z_\f(x))$ for all $x \in \*X$. \label{hyp:correspondence}
\end{hypothesis}

Our correspondence hypothesis, combined with the prior assumption that $\*Z_{\f}$ is compact, has an immediate implication for the topological relationship between $\*Z_{\f}$ and $\*M_{\f}$.

\begin{prop}
Under Hypothesis \ref{hyp:correspondence}, the map $\phi_{\f}: \*Z_{\f} \rightarrow \*M_\f$ is a homeomorphism.\footnote{This is simply a restatement of the well-established fact that a continuous invertible map over a compact domain has a continuous inverse \citep[Proposition 13.26]{sutherland_introduction_2009}. } \label{prop:homeomorphism}
\end{prop}

Proposition~\ref{prop:homeomorphism} tells us that under the continuous correspondence hypothesis, we should expect the representations directions $v_\f(x)$ to live on a \emph{manifold} $\*M_\f$ that is topologically identical to $\*Z_{\f}$. So, if $\*Z_\f$ is an interval, $\*M_\f$ is a one-dimensional curve in $\R^D$. 
If $\*Z_\f$ is a circle, then $\*M_\f$ is a loop.
If $\*Z_\f$ is a discrete set comprising $m$ values, $\*M_\f$ is a discrete set comprising $m$ points. 
More generally, a homeomorphism preserves connected components, holes, branching points, and more.

\begin{figure}[t]
  \centering

  \begin{minipage}[b]{0.32\linewidth}    
    \begin{minipage}[t]{\linewidth}       
      \centering
      \includegraphics[scale=0.08]{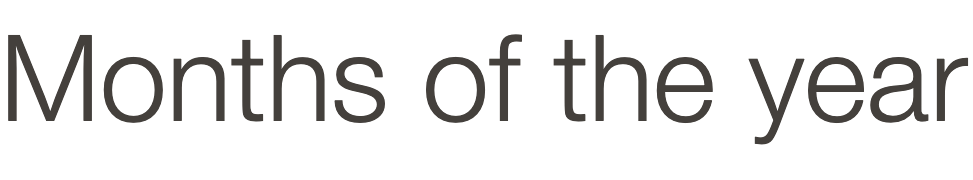}
      \\
      \includegraphics[scale=0.06]{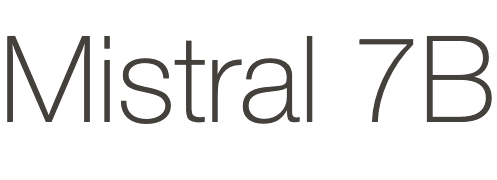}
      \\[-0.75ex]
      \includegraphics[width=\linewidth, trim={2.5cm 4.5cm 1.5cm 0.5cm}, clip]{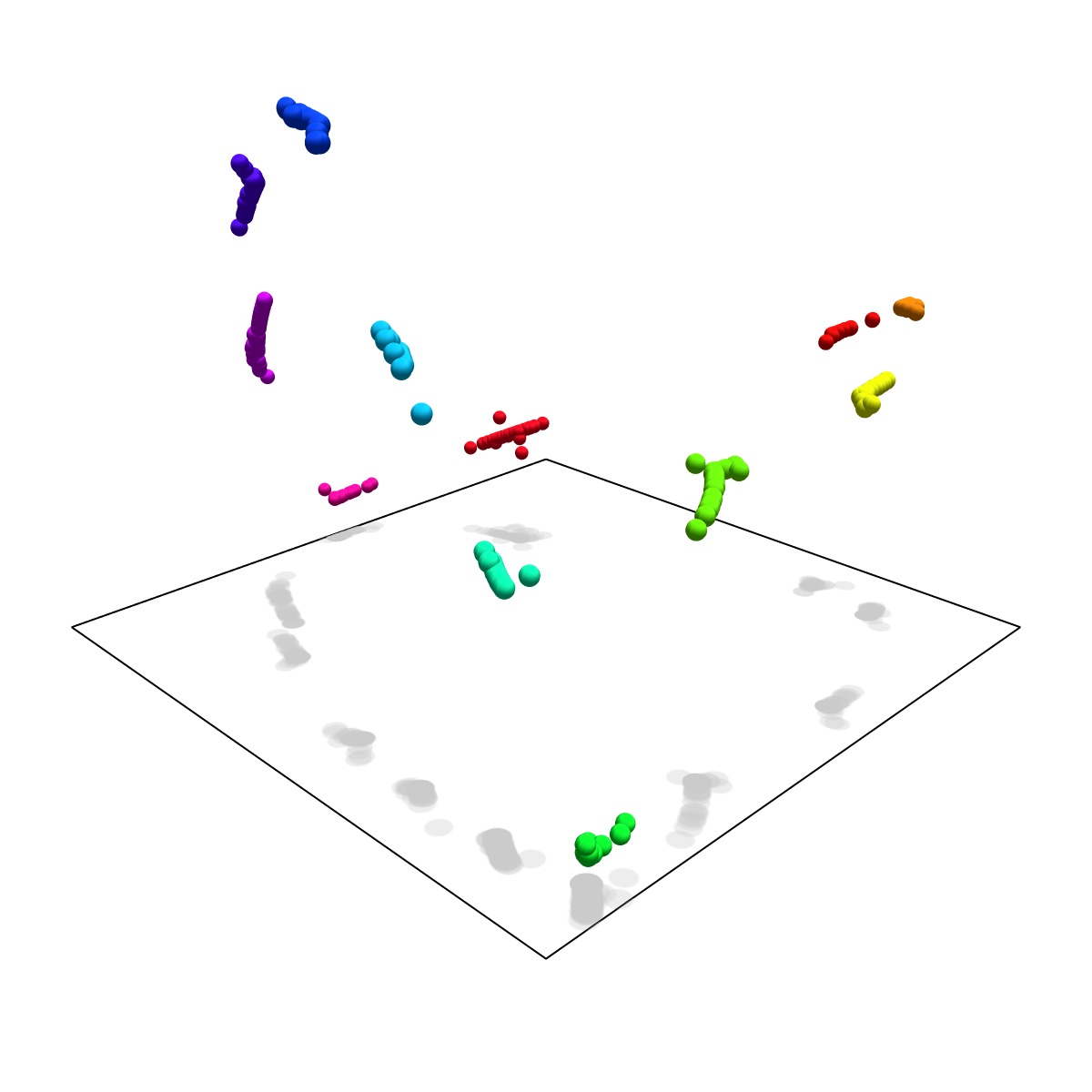}\\[1ex]

      \includegraphics[width=\linewidth, trim={1cm 3.5cm 0cm 5cm}, clip]{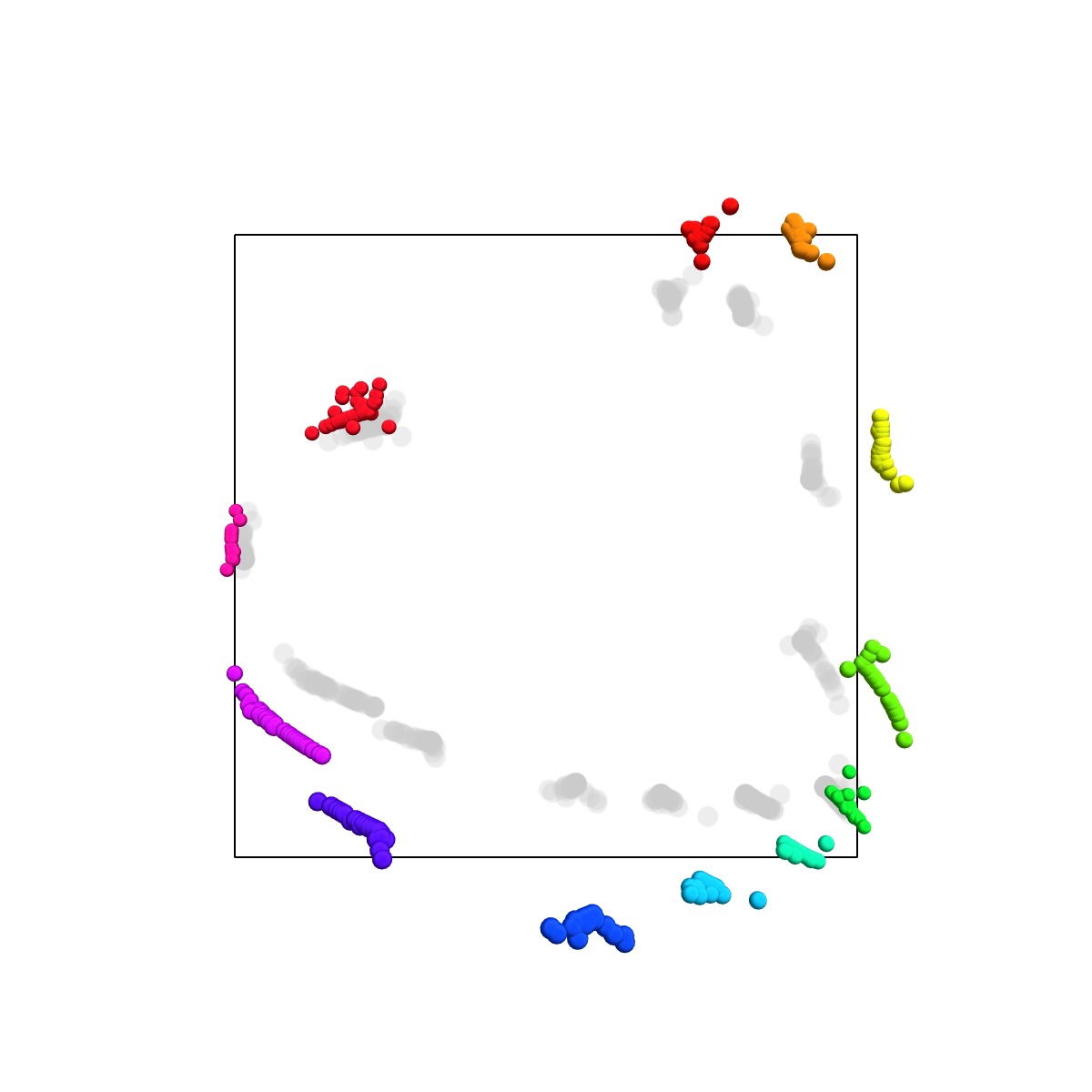}
    \end{minipage}
  \end{minipage}%
  \hspace{2em}
  \begin{minipage}[b]{0.32\linewidth}
    \begin{minipage}[t]{\linewidth}
      \centering
      \includegraphics[scale=0.08]{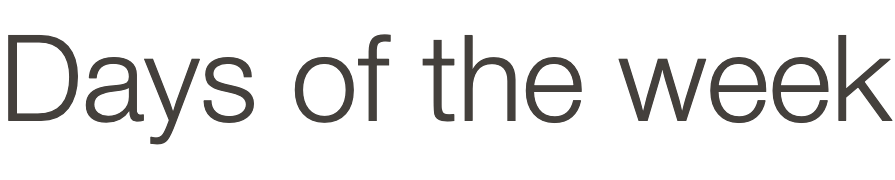}
      \\
      \includegraphics[scale=0.06]{images/mistral-7b.png}
      \\[-0.75ex]
      \includegraphics[width=\linewidth, trim={1.5cm 4.5cm 1.5cm 0.5cm}, clip]{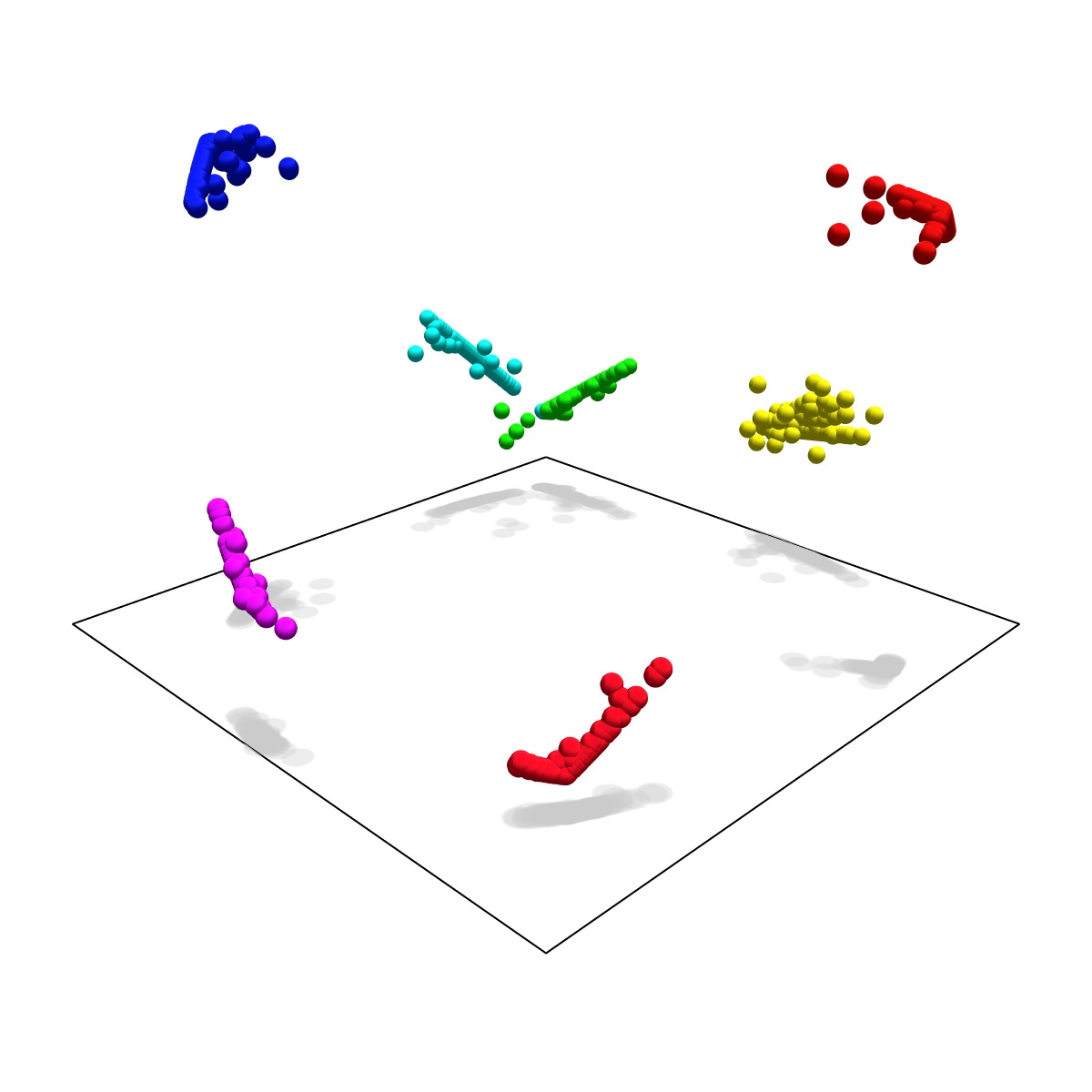}\\[1ex]
      \includegraphics[width=\linewidth, trim={0cm 3.5cm 0cm 5cm}, clip]{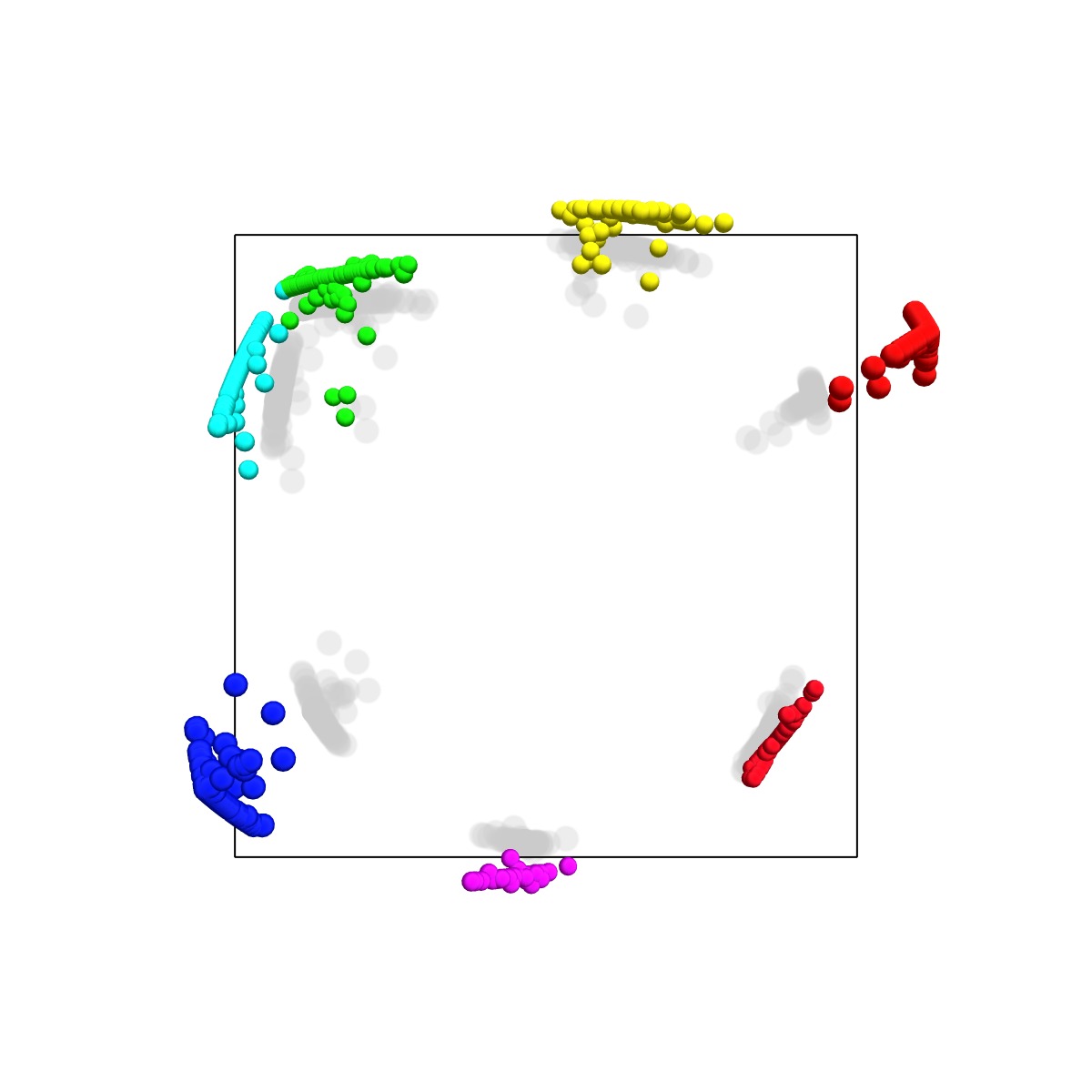}
    \end{minipage}
  \end{minipage}%

  \caption{Representation manifolds in token activations from layer 8 of Mistral 7B, processed via an SAE to extract representations of `months of the year' and `days of the week', as in \citet{engels_not_2025}. We normalise the representations to have norm one, and perform PCA into three dimensions. The top-down view of the first two principal components, which was shown in \citet{engels_not_2025}, obscures manifold structure which weaves through the third principal component.}
  \label{fig:manifolds_2}
\end{figure}

\subsection{Representations reflect the topology of features in LLMs}
\label{sec:homeomorphism_examples}
Figure~\ref{fig:manifolds_1} gives an indiction of the plausibility of Hypothesis~\ref{hyp:correspondence} in some examples.
The first and third subfigures show text embeddings obtained from OpenAI's {\tt text-embedding-large-3} model, with inputs corresponding to colours\footnote{These inputs are of are of the form ``The color of the object is <color>. What color is the object?''. Color names and hex-codes were obtained from the XKCD color survey \citep{munroe_xkcd_2010}, from which we removed entries with low saturation ($<0.4$), high brightness ($>0.8$), and whose names did not obviously refer to a color, such as fruits and gemstones. Some additional outliers were removed.}, and dates of the year\footnote{These inputs are of the form ``1st January'', ''2nd January'', $\ldots$ ``31st December''.} respectively.
This model returns 3,072 dimensional unit-norm embeddings which we reduce to three dimensions using PCA. We show two perspectives of each plot.
In both cases, we see that the embeddings are roughly arranged around a loop which, perhaps after some stretching and bending, could seem consistent with the abstract circular model we might have for such concepts, such as the ``colour wheel'' or the ``yearly cycle''. In particular, observe that the colours are arranged in the same order as the standard colour wheel of hue: red, purple, blue, green, yellow, orange, and back to red.

The second subfigure shows token activations of years of the twentieth century in layer 7 of {\tt GPT2-small}. This example is taken from \citet{engels_not_2025} who use a sparse autoencoder to attempt to 
disentangle the feature representations from the full superposed representation (see Section~5 of their paper for addition details of this procedure). We subselect only tokens corresponding to the years in question, normalize each activation vector to have norm one and perform PCA into three dimensions on the resulting vectors. One observes a clear one-dimensional curve which weaves and bends through the dimensions of the space, again reminiscent up to some geometric distortion of the standard human concept of a ``time line''.

Given what we see, our innate understanding of these concepts, and Proposition~\ref{prop:homeomorphism}, we might conjecture that, allowing for error of different kinds, the shapes are homeomorphic to the following metric spaces:

\begin{description}
\item[colour:] $\*Z_{\texttt{colour}} = [0,2\pi)$, $\dist_{\texttt{colour}}(x,y) = \min(|x-y|, 2 \pi - |x-y|)$, these angles corresponding to hues 0: red, ..., $\pi/3$: blue, ..., $2 \pi/3$: yellow.
\item[years:] $\*Z_{\texttt{years}} = [1900,1999]$, $\dist_{\texttt{year}}(x,y) = |x-y|$
\item[dates:] $\*Z_{\texttt{dates}} = [0,365)$, $\dist_{\texttt{dates}}(x,y) = \min(|x-y|, 365 - |x-y|)$
\end{description}
In the case of years, a simple statistic to assess the conjecture of homeomorphism presents itself: the \emph{rank} correlation between the years and their corresponding position \emph{along the manifold}. We approximate position along the manifold using a $K$-nearest neighbour graph with $K=10$ (picked as small as possible subject to the graph being connected), and rank the points according to weighted graph distance from the (mean) representation of 1900. The Kendall and Spearman rank correlations are 0.97 and over 0.99, respectively, telling us that the representations occur in very close to true temporal order along the manifold. 

In these examples it would clearly not be reasonable to say the shapes resembled circles or straight lines without any sort of geometric distortion, and in the coming section we provide a mechanistic argument for the presence of this geometric distortion in neural networks.

This effect could be missed in some earlier papers due to 2D projection. The left and middle of panels of Figure~1 of \citet{engels_not_2025} show circular arrangements of day-of-the-week and month representations, but these seem subject to significant geometric distortion once we view the data in 3D, as in Figure~\ref{fig:manifolds_2}.

\subsection{Manifold geometry and computation}

Our investigations (see Figure~\ref{fig:manifolds_1}) and those of many others \citep[see e.g.][]{ansuini_intrinsic_2019, cai_isotropy_2021, chang_geometry_2022, hanna_functional_2023}, have found not only that representations tend to live on low-dimensional manifolds, but that these manifolds curve and bend to occupy higher dimensional spaces. 
Why might it be advantageous for a language model to embed a concept in a larger dimension than its intrinsic topology seems to require? 

To answer this question, we shall briefly illustrate how the space of functions which can be computed as a linear projection of $\phi_{\f}(z)$ relates to its geometry. 
Since linear operations are crucial component in how one layer of a neural network maps to the next, it seems a sensible working hypothesis that they would arrange their representations as to maximize the expressivity of these linear computations.

For the purpose of this discussion, consider the case that $\*Z_\f$ is a unit interval $\*Z_\f = [0,1]$.
If one simply wanted to be able to read $z$ from $\phi_{\f}(z)$ using a linear projection, then it is sufficient to represent $\*Z_\f$ as an arc on $\mathbb{S}^{D-1}$. For example, to set $\phi_{\f}(z) = b_0(z) v_0 + b_1(z) v_1$ where $v_0, v_1 \in \mathbb{S}^{D-1}$ are orthogonal unit-vectors, $b_1(z) \propto z$, and $b_0(z)$ is a function which ensures that $\|\phi_\f(z)\|_2 = 1$. In this way, the identity operation $\operatorname{id}(z):=z$ can be computed via a linear projection $\operatorname{id}(z) \propto v_1 \cdot \phi_\f(z)$.

If instead, one wanted to be able to represent a richer class of functions of $z$ by linear projections of $\phi_\f(z)$, say, polynomials of order $p$,
then one could do this by setting $\phi_\f(z) = b_0(z)v_0 + \cdots b_{p+1}(z)v_{p+1}$ with $b_1(z) \propto 1, b_2(z) \propto z, b_3 \propto z^2,$ etc... Such a map represents the interval $[0,1]$ as a continuous path which weaves through a $p+2$-dimensional subspace of $\mathbb{S}^{D-1}$.

\paragraph{Superposition.} Under the Linear Representation Hypothesis \eqref{eq:general_LRH}, the language model cannot access $\phi_\f(z_\f(x))$ directly, but must do so via $\Psi(x)$. There is a generally agreed upon explanation, known as the superposition hypothesis, for how an algorithm might nonetheless be granted approximate access to $\phi_\f(z_\f(x))$, with only limited interference from other $\phi_{\f'}(z_{\f'}(x))$: features occur only sparsely (i.e. $\rho_\f(x)=0$ for most $\f\in\F$), and are represented in almost-orthogonal subspaces \citep{elhage_mathematical_2021, elhage_toy_2022},
a hypothesis which, in particular, would be consistent with the total number of features being substantially greater than the available representation dimensions\footnote{see, for example, Theorem~1 in the appendix of \citet{engels_not_2025}.}. 

If we assume (for a real-valued feature) that the identity $\operatorname{id}(z)$ is among the collection of functions linearly readable from $\phi_\f$, the superposition hypothesis also explains the efficacy of \emph{linear probes} \citep{alain_understanding_2017,gurnee_language_2023,nanda_emergent_2023,leask_calendar_2024}: low interference allows the feature of interest to be approximately recovered from the representation using linear regression. A similar story holds for discrete features accessed via linear classifiers.

\section{The interpretation of distance on representation manifolds}
There is an open question in the mechanistic interpretability community about the meaning of \emph{distance} in representation space, perhaps well-summarised in the commentary of \citet{olah_feature_2024}:
\begin{center}
``We suspect this idea that feature manifolds many be embedded in more complex ways than their topology suggests, in order to achieve a given distance metric, may actually be quite deep and important.''
\end{center}

\begin{figure}[t]
  \centering
  \begin{minipage}[b]{0.32\linewidth}
    \centering
    \hspace{1.1em}\includegraphics[scale=0.08]{images/colours_title.png}
      \\
      \hspace{1.1em}\includegraphics[scale=0.06]{images/text-embedding-large-3.png}
      \\[-0.5ex]
    \includegraphics[width=\linewidth]{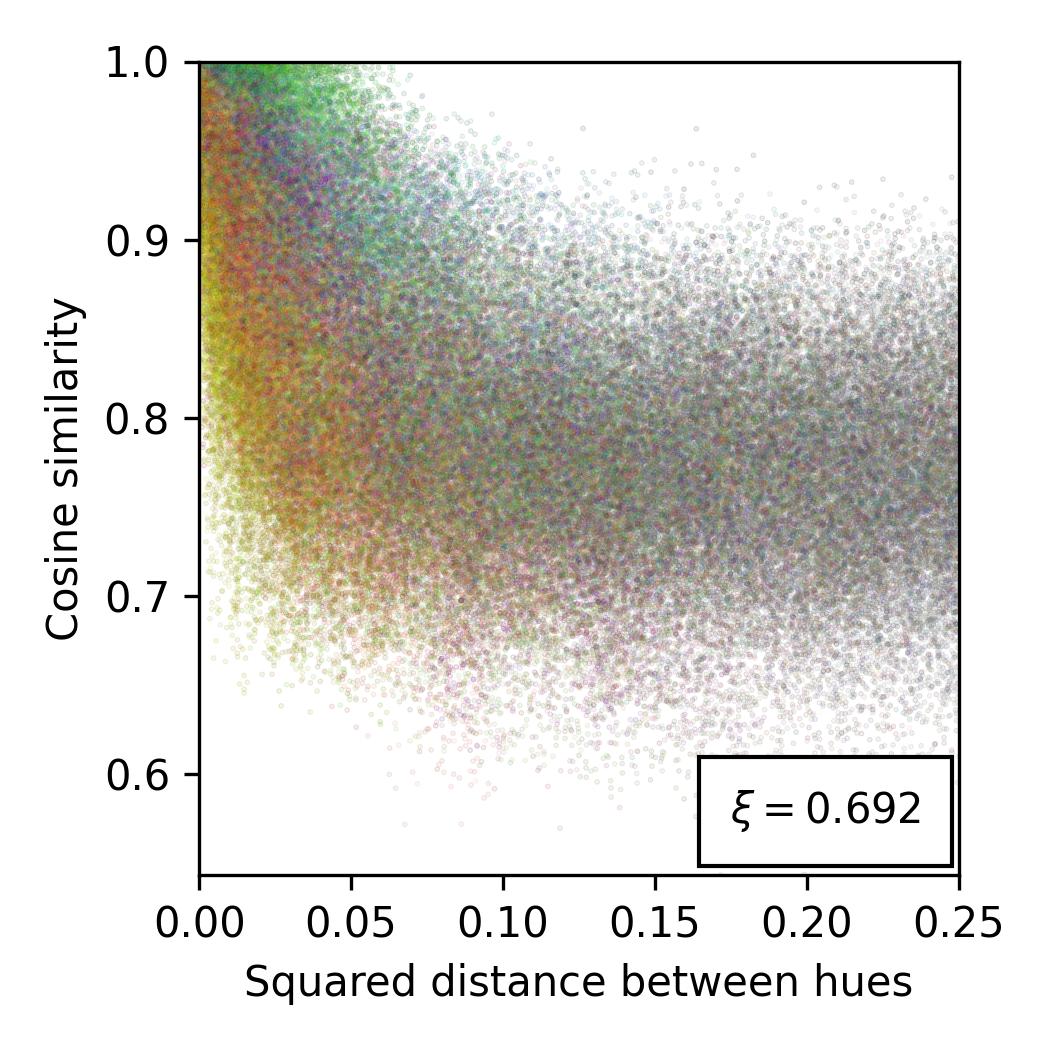}\\[1ex]
    \includegraphics[width=\linewidth]{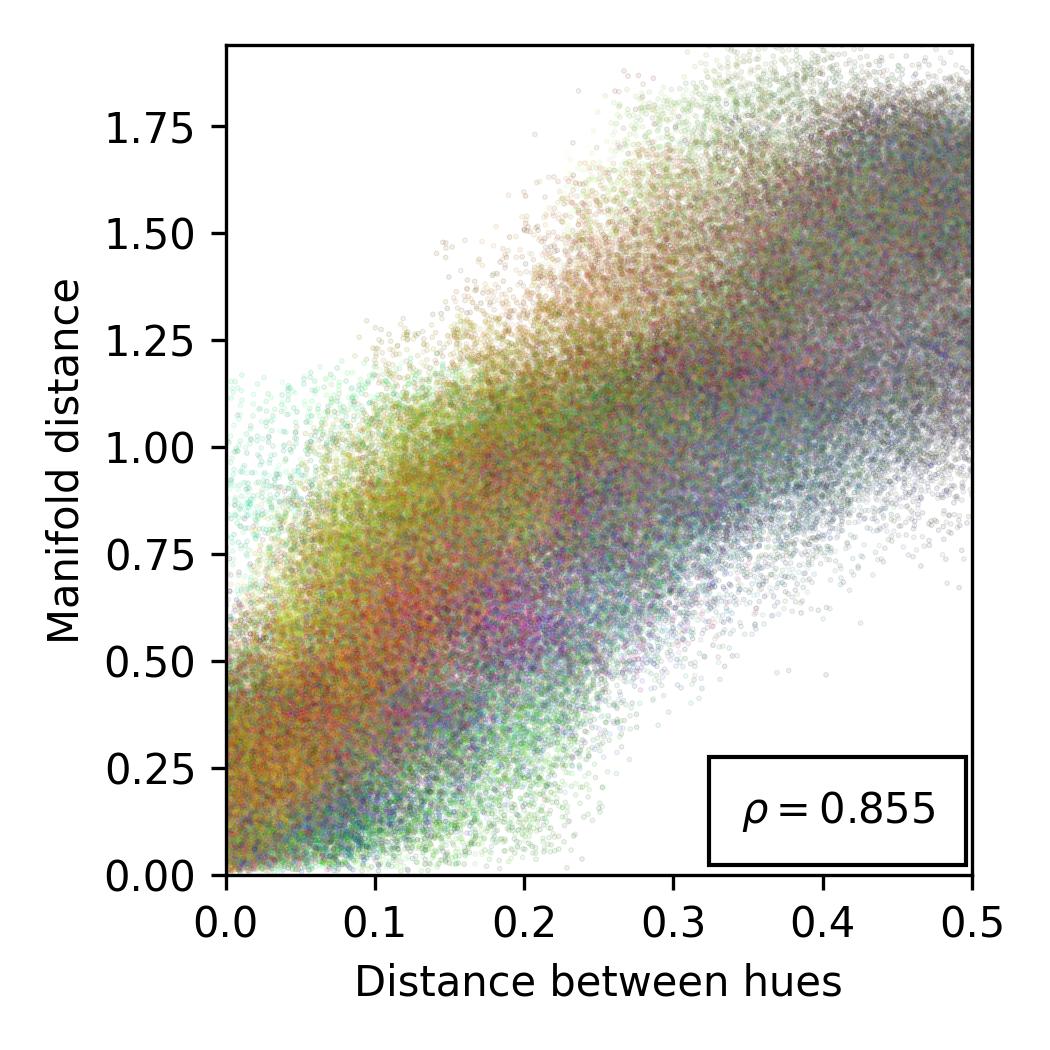}
  \end{minipage}
  \hfill
  \begin{minipage}[b]{0.32\linewidth}
    \centering
    \hspace{1.8em}\includegraphics[scale=0.08]{images/years_title.png}
      \\
      \hspace{1.8em}\includegraphics[scale=0.06]{images/gpt2-small_lower.png}
      \\[-0.5ex]
    \includegraphics[width=\linewidth]{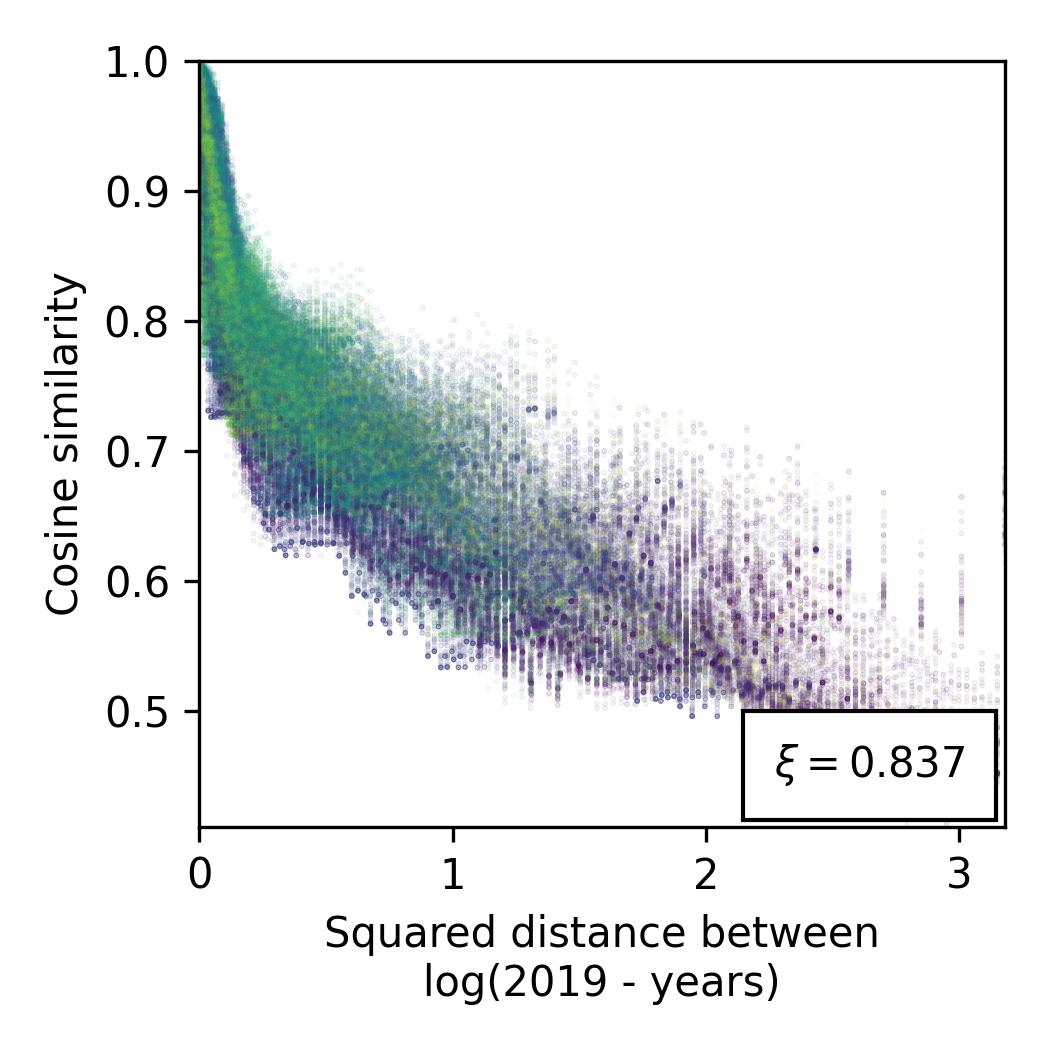}\\[1ex]
    \includegraphics[width=\linewidth]{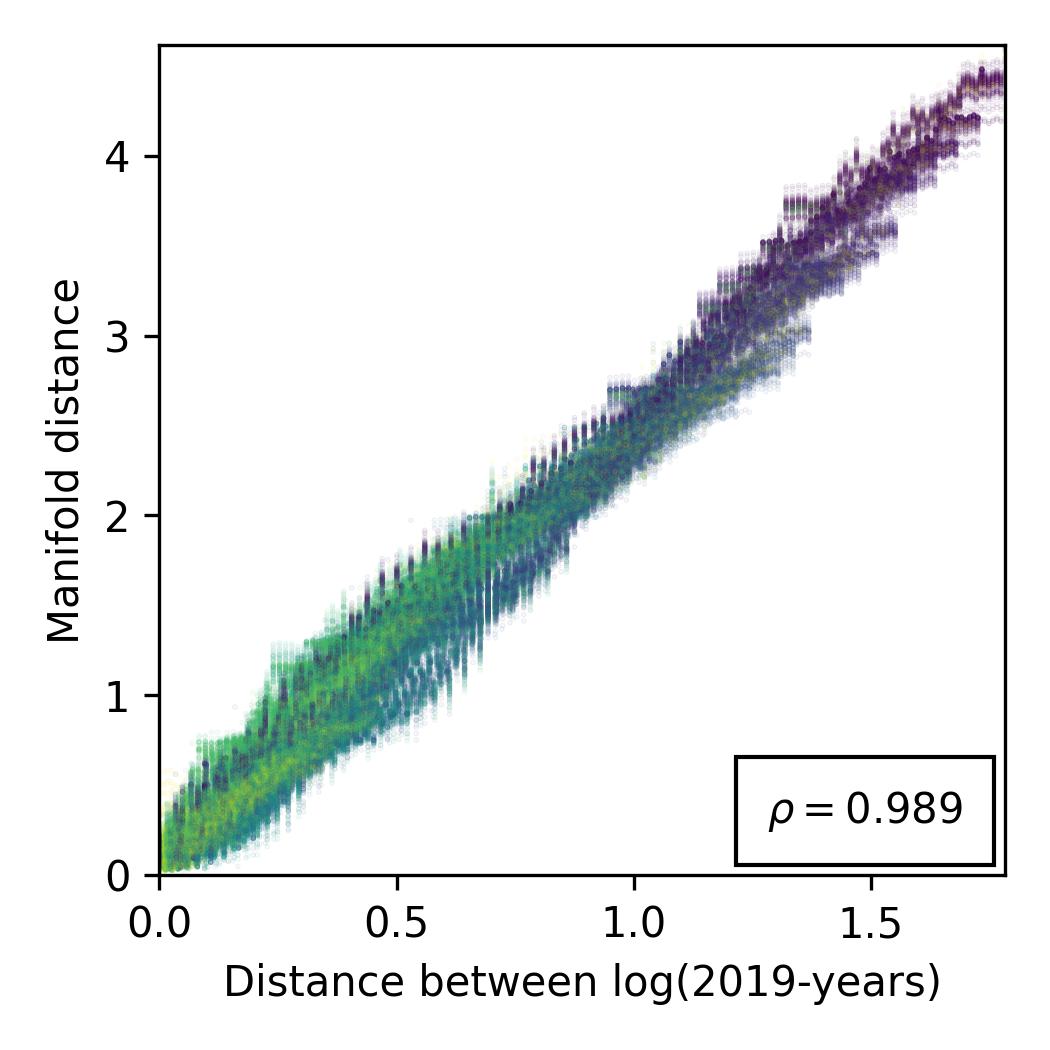}
  \end{minipage}
  \hfill
  \begin{minipage}[b]{0.32\linewidth}
    \centering
    \hspace{2.2em}\includegraphics[scale=0.08]{images/dates_title.png}
      \\
      \hspace{2.2em}\includegraphics[scale=0.06]{images/text-embedding-large-3.png}
      \\[-0.5ex]
    \includegraphics[width=\linewidth]{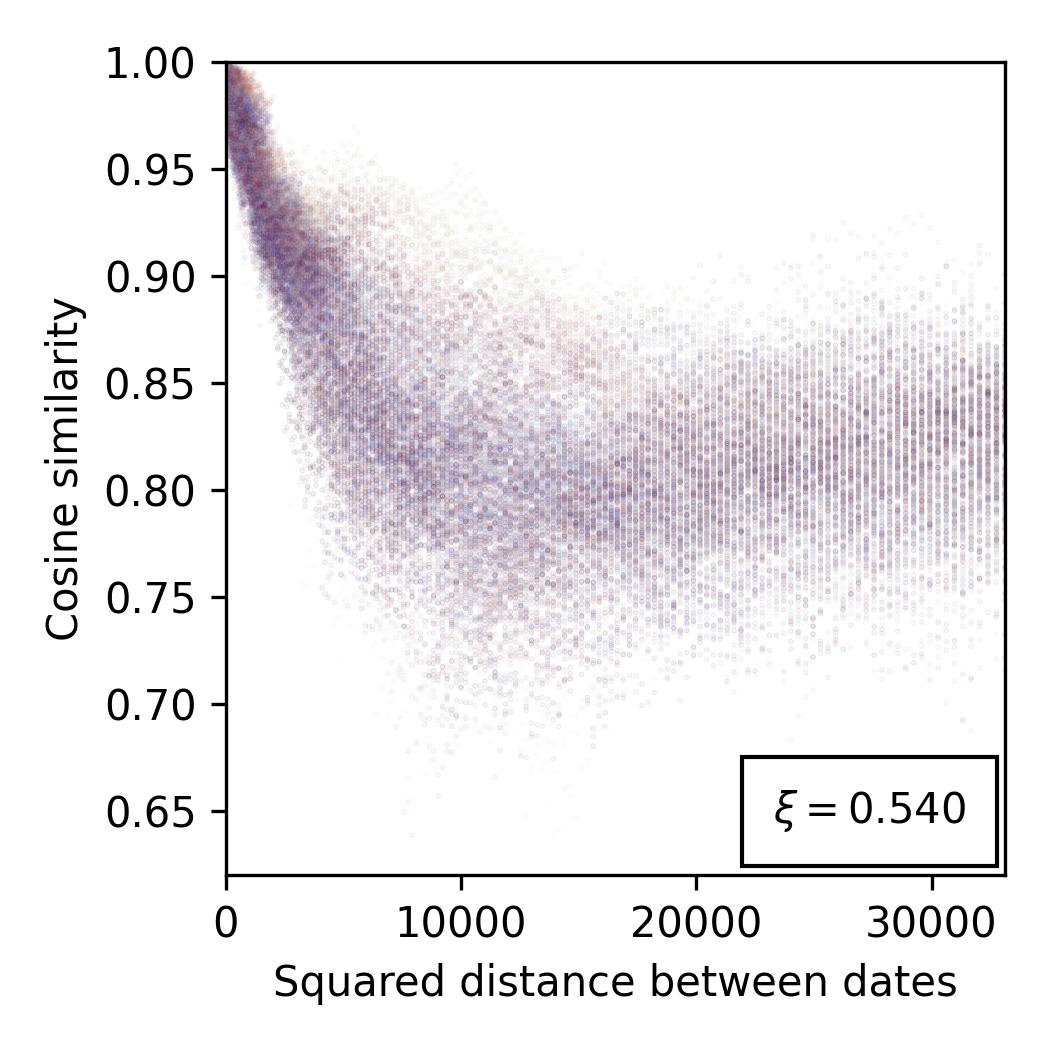}\\[1ex]
    \includegraphics[width=\linewidth]{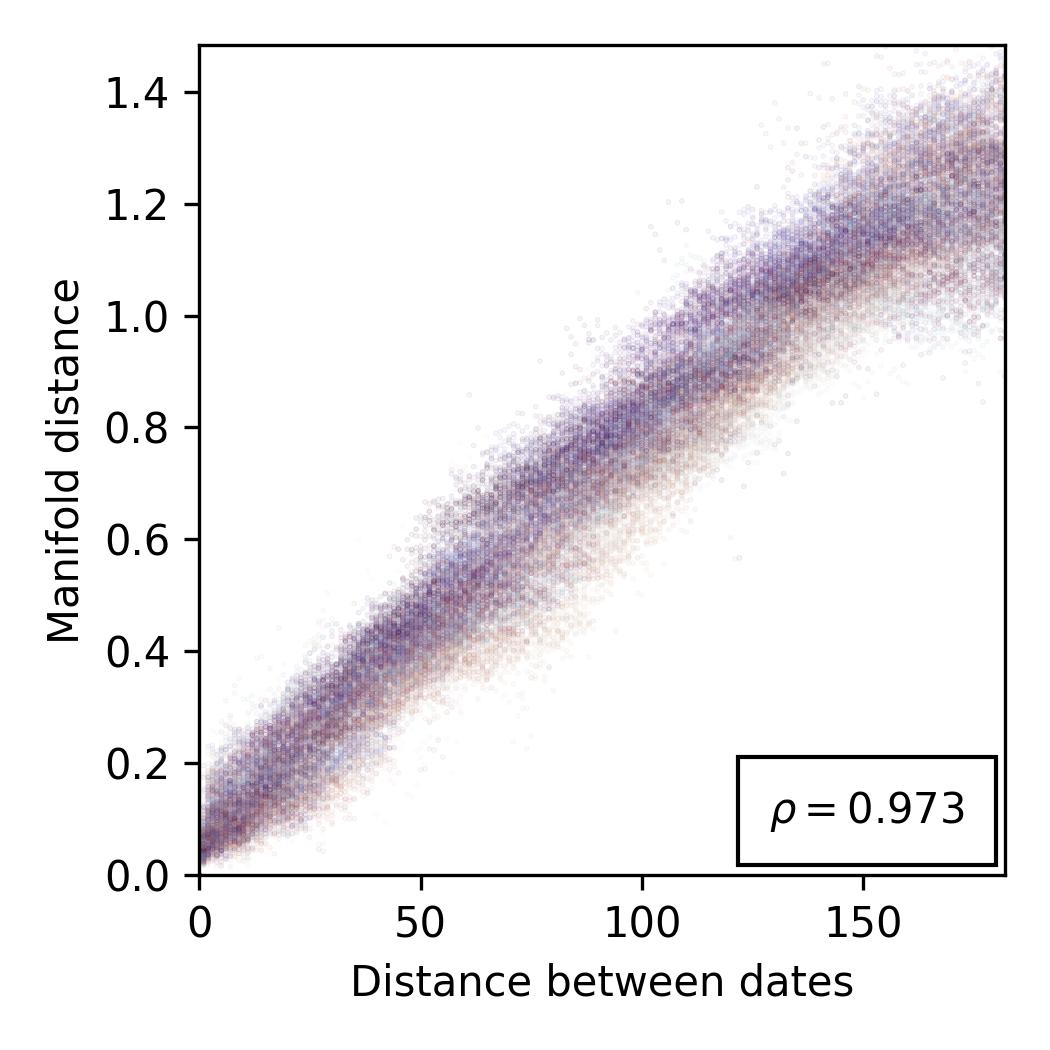}
  \end{minipage}

  \caption{Evidence for Hypothesis~\ref{hyp:cossim} and its implications in Theorem~\ref{thm:isometry}. For each pair of representations, we plot their cosine similarities (first row) and estimated manifold distances (second row) against their (squared) distance in a putative metric space. We report the Chatterjee ($\xi$) and Pearson ($\rho$) correlation coefficients, respectively. Colours correspond to the colourmaps described in Figure~\ref{fig:manifolds_1}.}
  \label{fig:isometry_plots}
\end{figure}

If we accept there is a correspondence between features and their representations (Hypothesis~\ref{hyp:correspondence}), arguably the next most basic question we can ask is whether cosine similarity in representation space somehow tells us about distance between the corresponding feature values.

\begin{hypothesis}[cosine similarity reflects distance]
    Locally, the cosine similarity between feature representations and the distance their corresponding feature values are inversely related. Formally, there is some function $g_\f$ with continuous second derivatives and with $g_\f'(0)<0$, and some $\epsilon>0$, such that
\begin{equation*}
    \CosSim\left(\phi_\f(z),\phi_\f(z^{\prime})\right) = g_\f(\dist_\f(z,z^{\prime})^{2}), 
\end{equation*}
for all $z,z' \in \*Z_{\f}$ such that $\dist_\f(z,z') \leq \epsilon$.
\label{hyp:cossim}
\end{hypothesis} 

Strengthening just Hypothesis~\ref{hyp:correspondence} to both Hypotheses~\ref{hyp:correspondence} and \ref{hyp:cossim} has formidable consequences: there is an intrinsic sense in which \emph{a feature and its representation are geometrically indistinguishable}. This statement is made precise in Theorem~\ref{thm:isometry}.

Metric spaces allow for a natural definition of a \emph{path} which, loosely speaking, captures the idea of a continuous route from one point to another and there is an associated definition of the \emph{length} of a path, denoted $L$, which generalises the usual Euclidean notion of length \citep{burago_course_2001}. Formally, a path in $\*Z_{\f}$ is a continuous mapping $\eta$ from some interval $[a,b]$ to $\*Z_{\f}$, and the length of such a path is
\[
L(\eta)\coloneqq\sup_{\mathcal{T}}\sum_{i=1}^{n}\dist_\f(\eta_{t_{i}},\eta_{t_{i-1}}),
\]
where the supremum is over all $n\geq 1$ and $\mathcal{T}=(t_{0},t_{1},\ldots,t_{n})$
such that $t_{0}=a\leq t_{1}\leq\cdots\leq t_{n}=b$.

Given a path in $\*Z_\f$, we can think of the image of this path when mapped through $\phi_\f$, which we call the corresponding path on $\*M_\f$. Its length is defined similarly, with Euclidean distance in place of $\dist_\f$. 

\begin{thm}
Let $\eta$ be a path on $\*Z_\f$ of finite length and, assuming Hypothesis~\ref{hyp:correspondence}, let $\gamma$ be the corresponding path on $\*M_\f$. Then, under Hypothesis~\ref{hyp:cossim},
\[
L(\gamma)=\sqrt{-2g_{\f}^{\prime}(0)}L(\eta). \label{thm:isometry}
\]
\end{thm}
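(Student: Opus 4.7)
The plan is to combine a second-order Taylor expansion of Hypothesis~\ref{hyp:cossim} with the standard characterization of path length as a limit of polygonal sums along refining partitions, thereby reducing the global claim to a local one. For the local ingredient, I would use that $\|u-v\|^{2}=2-2\langle u,v\rangle$ for unit vectors $u,v$, so applying Hypothesis~\ref{hyp:cossim} to $u=\phi_\f(z)$ and $v=\phi_\f(z')$ gives $\|\phi_\f(z)-\phi_\f(z')\|^{2}=2\bigl(1-g_{\f}(d^{2})\bigr)$, with $d := \dist_\f(z,z')$. Since $\CosSim(u,u)=1$ we have $g_{\f}(0)=1$, and the assumed $C^{2}$ regularity of $g_{\f}$ combined with Taylor's theorem yields, for all sufficiently small $d$,
\[
\|\phi_\f(z)-\phi_\f(z')\| \;=\; c\,d \;+\; O(d^{3}), \qquad c := \sqrt{-2g_{\f}'(0)}>0,
\]
with an $O$-constant that is uniform in $(z,z')$ since $g_\f''$ is continuous on a compact interval.

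Next, I would globalize. Since $[a,b]$ is compact and $\eta$ is continuous, $\eta$ is uniformly continuous, so for any $\delta>0$ every partition $\mathcal{T}$ of $[a,b]$ with sufficiently small mesh has all segment distances $d_{i} := \dist_\f(\eta_{t_{i}},\eta_{t_{i-1}})$ at most $\delta$. Choosing $\delta$ inside the local regime above and summing the pointwise estimate gives
\[
\bigl|\,S_{\gamma}(\mathcal{T}) - c\,S_{\eta}(\mathcal{T})\,\bigr| \;\le\; K \max_{i} d_{i}^{2}\,\cdot\,S_{\eta}(\mathcal{T}) \;\le\; K \max_{i} d_{i}^{2}\,\cdot\,L(\eta),
\]
where $S_{\eta}(\mathcal{T}) := \sum_{i} d_{i}$ and $S_{\gamma}(\mathcal{T})$ is the analogous Euclidean polygonal sum for $\gamma$. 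The right-hand side vanishes as the mesh of $\mathcal{T}$ tends to zero, so $S_{\gamma}(\mathcal{T}) - c\,S_{\eta}(\mathcal{T}) \to 0$.

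To conclude, I would invoke the standard fact that for a continuous path in a metric space, the length (defined as a supremum over partitions) coincides with the limit of polygonal sums along any sequence of partitions whose mesh tends to zero (see e.g.\ Proposition~2.3.4 of \citet{burago_course_2001}). Applying this to both $\eta$ and $\gamma$ and passing to the limit in the inequality above yields $L(\gamma) = c\,L(\eta)$, which is the claim.

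The delicate point is this last step: a partition with small mesh need not refine any given near-optimal partition, so the supremum definition of $L$ does not obviously equal a mesh-driven limit. The standard fix is to compare an arbitrary fine $\mathcal{T}$ against a coarse near-optimal $\mathcal{T}_{0}$ via their common refinement, and to control the change in the polygonal sum caused by inserting the finitely many points of $\mathcal{T}_{0}$ using the modulus of continuity of $\eta$. This is the only substantive non-local ingredient the argument requires.
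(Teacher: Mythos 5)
Your proposal is correct, and the local ingredient (expand $\|\phi(z)-\phi(z')\|^{2}=2(1-g(\dist(z,z')^{2}))$, use $g(0)=1$, and Taylor-expand $g$ to get a proportional relation up to higher-order error) is identical in spirit to the paper's. You then divide through by $cd$ and expand the square root; the paper instead applies a small algebraic lemma, $|\alpha-\beta|\le|\alpha^{2}-\beta^{2}|^{1/2}$, to pass from squared increments to increments directly, avoiding any case analysis at $d=0$ and keeping the error explicitly as $|g''(c_i)|^{1/2}d_i^{2}$. Both yield a per-segment error that, after summing and using $\sum d_i\le L(\eta)$, is controlled by $L(\eta)\max_i d_i$ (or $d_i^{2}$) and hence vanishes.

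Where you genuinely diverge from the paper is in the globalization. You invoke, as a black box, the fact that the length of a rectifiable curve equals the limit of polygonal sums along any mesh-shrinking sequence of partitions, and you correctly flag that this is the one non-trivial step (a fine partition need not refine a near-optimal one; the fix is a common-refinement and modulus-of-continuity argument). The paper sidesteps this lemma entirely: it reparametrises $\eta$ at unit speed, picks \emph{three} partitions --- one fine enough for the Taylor estimate, one near-optimal for $L(\gamma)$, one near-optimal for $L(\eta)$ --- and takes their union. Because inserting points can only increase a polygonal sum (triangle inequality), the union simultaneously inherits all three properties, and a three-term triangle-inequality decomposition of $|L(\gamma)-cL(\eta)|$ closes the argument without ever proving mesh-convergence. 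This is marginally more self-contained; your version is cleaner to state but pushes the same common-refinement argument into the cited proposition. Both are sound; the equivalence of the bookkeeping should be flagged, since the modulus-of-continuity control you mention as the ``delicate point'' is exactly what the paper's use of unit-speed parametrisation and partition unions is engineered to avoid.
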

A proof of Theorem~\ref{thm:isometry} is given in the appendix. Theorem~\ref{thm:isometry} tells us that we can recover the intrinsic geometry of $\*Z_\f$, even though we know (almost) nothing about $g_{\f}$: shortest paths on $\*M_{\f}$ correspond to shortest paths on $\*Z_{\f}$, and their lengths are equal, up to a choice of unit (reflected by $\sqrt{-2g_{\f}^{\prime}(0)}$)

\begin{figure}[t]
    \centering
    \hspace{0em}\includegraphics[scale=0.08]{images/years_title.png}
      \\
      \hspace{0em}\includegraphics[scale=0.06]{images/gpt2-small_lower.png}
      \\[-0.5ex]
      \hspace{-1.7em}
    \includegraphics[width=0.32\linewidth]{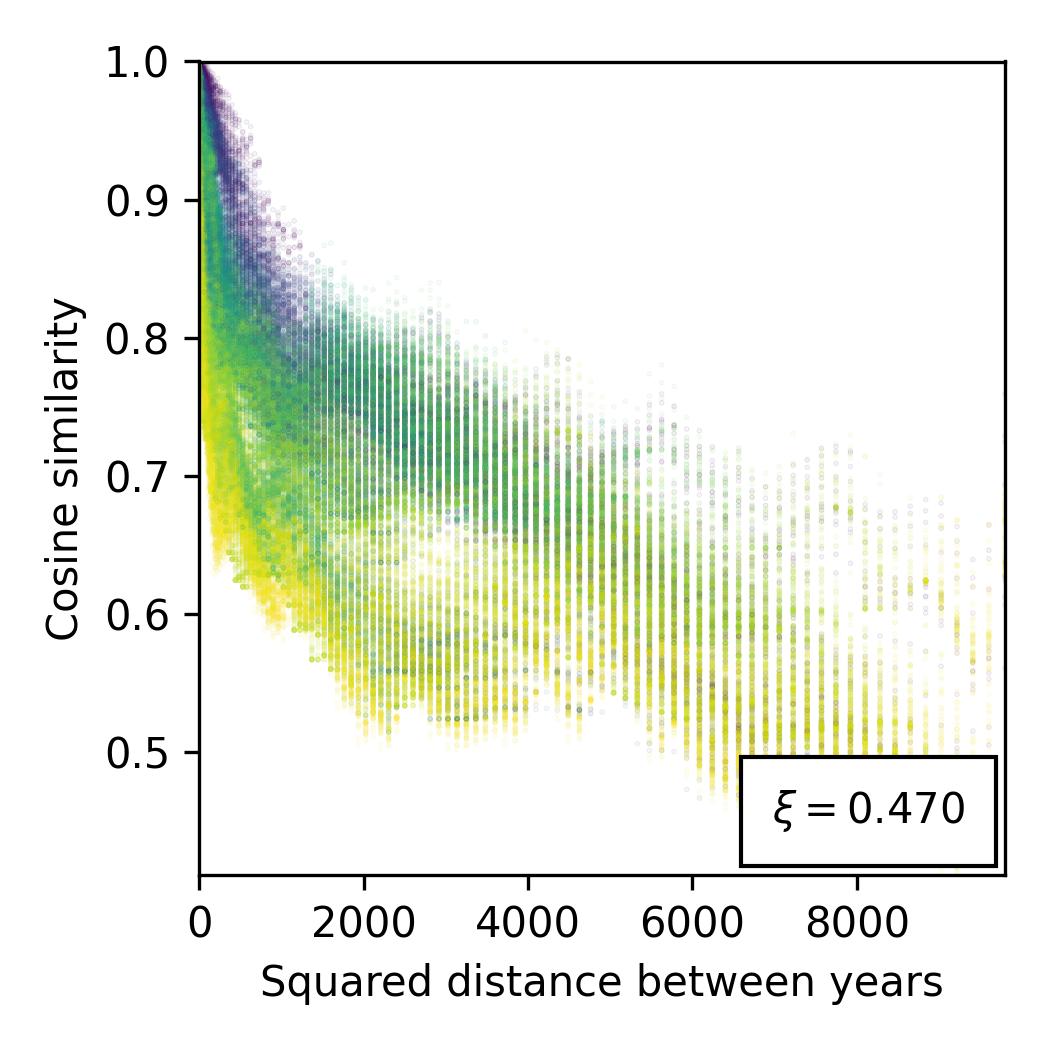}
    \includegraphics[width=0.315\linewidth]{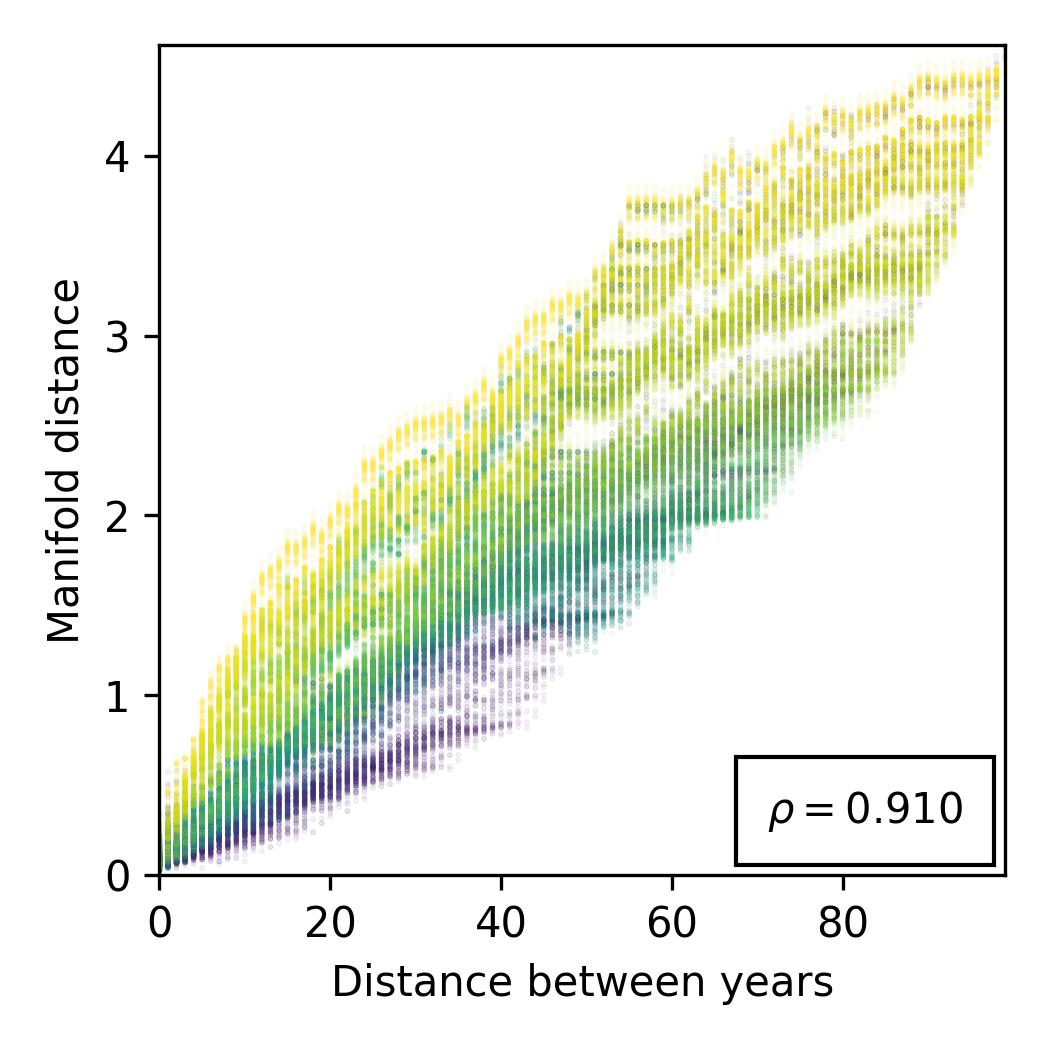}
    
    
    \caption{Evidence \emph{against} isometry with respect to the metric space $\*Z_{\texttt{years}} = [1900,1999]$, $\dist_{\texttt{year}}(x,y) = |x-y|$. There is no clear regular linear relationship between distances in this metric space and estimated distances on the representation manifold. The colours indicate that distances between more recent years are expanded on the manifold.}
    \label{fig:linear_years_plots}
\end{figure}

\subsection{Geodesic distances on representation manifolds of LLMs are meaningful}
We now explore the plausibility of Hypothesis~\ref{hyp:cossim} in the same colour, year, date examples considered in Section~\ref{sec:homeomorphism_examples} and Figure~\ref{fig:manifolds_1}. In all cases, we find indications of isometry, with some important caveats.

Given a putative metric space, Hypothesis~\ref{hyp:cossim} suggests two diagnostic tests. The first (direct) approach is to plot cosine similarity against squared distance, to check if the first appears to be a decreasing function of the second, around zero, up to noise. We quantify the global strength of functional dependence using Chatterjee's correlation coefficient $\xi$ \citep{chatterjee_new_2021}, which would be 1 if the cosine similarity was a deterministic function of distance. These experiments are shown in the first row of Figure~\ref{fig:isometry_plots}.

The second (indirect) approach is to test Theorem~\ref{thm:isometry}: geodesic distance on $\*M_{\f}$ (shortest path length) should be linear in the geodesic distance on $\*Z_{\f}$, up to noise (the slope being $\sqrt{-2g'(0)}$). We estimate geodesics on $\*M_{\f}$ by constructing the $K$-nearest-neighbours graph over the representations, and reporting weighted graph distance, $k$ chosen as small as possible subject to the graph being connected. We quantify the strength of isometry using Pearson's correlation $\rho$, which would be 1 if the distances were in a perfect proportional relationship. These experiments are shown in the second row of Figure~\ref{fig:isometry_plots}.

Across our experiments, we have found that a low-dimensional projection tends to be necessary for the representations to plausibly show isometry with a simple metric space.
For our text embeddings, we find that projecting onto the first few (uncentered) principal components works well.
The routine ``low-rank'' explanation that the remaining components are mostly noise seems disputable; these components often show clear structure. Our best explanation is that semantic similarity is much richer than the rudimentary metric spaces to which they are being compared. It is likely that we could achieve a deeper understanding of semantic similarity through improved metric space design. 
In the example of years, the process of extracting feature representation via an SAE automatically yields low-dimensional representations, so no PCA is applied in this case.

Recall that we conjectured the following metric space for the years example: $\*Z_{\texttt{years}} = [1900,1999]$, $\dist_{\texttt{year}}(x,y) = |x-y|$. Although we found a rank correlation near 1, indicating homeomorphism, the evidence of the tests above is \emph{against} isometry. The clearest indication in this direction is possibly provided by the right panel of Figure~\ref{fig:linear_years_plots}, which does not show a regular linear relationship, the colours suggesting that distances between more recent years are expanded on the manifold. 

In light of this, we consider a modified representation $\*Z_{\texttt{years}} = \{\log(2019-\text{year}): \text{year} \in [1900, 1999]\}$, $\dist_{\texttt{year}}(x,y) = |x-y|$, 2019 being the year GPT-2 was released \citep{radford_language_2019}. Observe that the rank correlation as computed in Section~\ref{sec:homeomorphism_examples} remains unchanged: 
the two representations are homeomorphic, and cannot be distinguished on purely topological criteria. The tests are now in much stronger support of isometry. The top-middle plot of Figure~\ref{fig:isometry_plots} shows a trend which is clearly decreasing at zero, and a globally high functional dependence, 0.84.  The bottom-middle panel of Figure~\ref{fig:isometry_plots} shows a clear linear fit, achieving a Pearson correlation of 0.99. 

Isometry is also found to be plausible for dates, and weakly plausible for colours, both with the original circular metric spaces conjectured in Section~\ref{sec:homeomorphism_examples}. The bottom-right panel shows a clear linear fit, achieving a Pearson correlation of 0.97. Observe in this case that the cosine similarity appears to be inconsistent with the metric for large distances, illustrating the point that Hypothesis~\ref{hyp:cossim} only requires an inverse functional relationship to hold locally. 

\section{Discussion, limitations and future work}

This work provides a formal mathematical framework to explain and interpret representation manifolds in large language models. By modeling features as metric spaces, we are able to accurately characterise the topological and geometric properties of their representations under some basic hypotheses.  

We perform some preliminary investigations on internal representations from {\tt GPT2-small} and text embeddings from OpenAI's {\tt text-embedding-large-3} model, which validate our theory and provide a nuanced, quantitative view of the correspondence between human concepts and their representation at different levels of geometric fidelity, namely homeomorphism and isometry.

We find hints that these models encode distances in ways which are sometimes unexpected: years of the twentieth century in {\tt GPT2-small} appear to be encoded on a logarithmic scale, with larger distances between more recent years, and colours in {\tt text-embedding-large-3} appear to be encoded in a cycle of hues, rather than representations that other systems might have chosen, such as RGB or wavelength.

\subsection{Limitations}
In the spirit of scientific investigation, we have opted for a hypothesis-driven approach to structure discovery, in which we put down a possible metric space as a hypothesis and then assess evidence in favour or against.  

This ``manual'' approach is clearly not scalable, and moreover relies on there being some reasonable starting hypothesis for the metric space, which could be difficult for many features (say, emotions), as is evident from prior research \citep{li_emergent_2023, nanda_emergent_2023}. There is an unexplored alternative approach of \emph{learning the metric}, but we do not know exactly how this would proceed given that an interpretable solution would presumably remain a requirement.

In our experiments on {\tt text-embedding-large-3}, we use PCA with some success to isolate simple human-understandable distances. However, in reality, we expect that the true notions of distance used by the language model are more complex: we find additional structure in further principal components and think it is possible that a language model could encode distances in a way that is mechanistically useful, but does not correspond to any existing human understanding of the feature.

Another limitation of our approach relates to the fundamental statistical difficulty of estimating manifolds in the presence of noise \citep{genovese2012minimax}. Here, we have opted for a simple and interpretable approach of using the $K$-nearest-neighbour graph to approximate the manifold, but this is prone to short-circuits causing enormous errors in the estimated manifold distances. It is often the case that one has to manually prune the graph in order to achieve reasonable manifold distance estimates, and we believe that more robust methodology for manifold estimation would be required to scale up our approach.

\subsection{Implications for mechanistic interpretability research}
In mechanistic interpretability, one of the underlying motivations for understanding representation geometry is to be able to steer model outputs by making interventions on their internal representations. For features represented on manifolds, 
our insights suggest a path forward for doing this: learn the map $\phi_\f$ which maps the feature $\*Z_\f$ onto its representation manifold $\*M_\f$. 

Sparse autoencoders provide a potentially promising avenue for this. We conjecture that the sparsity penalty of a sparse autoencoder, trained on representation manifold, will encourage it to learn a collection of dictionary vectors which trace the manifold (see Section~4 of \citet{engels_not_2025} for an argument for this phenomenon on representation subspaces). We also conjecture that much observed feature splitting in SAEs is a result of this. We hope our work will encourage development of ``manifold-aware'' SAEs. 

Finally, mechanistic interpretability is a nascent field of research which is still developing a common language, and we hope that researchers will find the formalism of a feature as a metric space to be a useful possibility in future scientific discourse. 




\bibliographystyle{plainnat}
\bibliography{alex/alex_references, submission_refs}

\begin{thebibliography}{49}
\providecommand{\natexlab}[1]{#1}
\providecommand{\url}[1]{\texttt{#1}}
\expandafter\ifx\csname urlstyle\endcsname\relax
  \providecommand{\doi}[1]{doi: #1}\else
  \providecommand{\doi}{doi: \begingroup \urlstyle{rm}\Url}\fi

\bibitem[Alain and Bengio(2017)]{alain_understanding_2017}
Guillaume Alain and Yoshua Bengio.
\newblock Understanding intermediate layers using linear classifier probes.
\newblock In \emph{5th {International} {Conference} on {Learning} {Representations}, {ICLR} 2017, {Toulon}, {France}, {April} 24-26, 2017, {Workshop} {Track} {Proceedings}}. OpenReview.net, 2017.
\newblock URL \url{https://openreview.net/forum?id=HJ4-rAVtl}.

\bibitem[Ansuini et~al.(2019)Ansuini, Laio, Macke, and Zoccolan]{ansuini_intrinsic_2019}
Alessio Ansuini, Alessandro Laio, Jakob~H Macke, and Davide Zoccolan.
\newblock Intrinsic dimension of data representations in deep neural networks.
\newblock \emph{Advances in Neural Information Processing Systems}, 32, 2019.

\bibitem[Arora et~al.(2018)Arora, Li, Liang, Ma, and Risteski]{arora_linear_2018}
Sanjeev Arora, Yuanzhi Li, Yingyu Liang, Tengyu Ma, and Andrej Risteski.
\newblock Linear {Algebraic} {Structure} of {Word} {Senses}, with {Applications} to {Polysemy}.
\newblock \emph{Transactions of the Association for Computational Linguistics}, 6:\penalty0 483--495, December 2018.
\newblock ISSN 2307-387X.
\newblock \doi{10.1162/tacl_a_00034}.
\newblock URL \url{https://direct.mit.edu/tacl/article/43451}.

\bibitem[Bostrom(2014)]{bostrom_superintelligence_2014}
Nick Bostrom.
\newblock \emph{Superintelligence: {Paths}, {Dangers}, {Strategies}}.
\newblock Oxford University Press, Oxford, UK, 2014.
\newblock ISBN 978-0-19-967811-2.

\bibitem[Bricken et~al.(2023)Bricken, Templeton, Batson, Chen, Jermyn, Conerly, Turner, Anil, Denison, Askell, Lasenby, Wu, Kravec, Schiefer, Maxwell, Joseph, Hatfield-Dodds, Tamkin, Nguyen, McLean, Burke, Hume, Carter, Henighan, and Olah]{bricken_towards_2023}
Trenton Bricken, Adly Templeton, Joshua Batson, Brian Chen, Adam Jermyn, Tom Conerly, Nick Turner, Cem Anil, Carson Denison, Amanda Askell, Robert Lasenby, Yifan Wu, Shauna Kravec, Nicholas Schiefer, Tim Maxwell, Nicholas Joseph, Zac Hatfield-Dodds, Alex Tamkin, Karina Nguyen, Brayden McLean, Josiah~E Burke, Tristan Hume, Shan Carter, Tom Henighan, and Christopher Olah.
\newblock Towards {Monosemanticity}: {Decomposing} {Language} {Models} {With} {Dictionary} {Learning}.
\newblock \emph{Transformer Circuits Thread}, 2023.
\newblock URL \url{https://transformer-circuits.pub/2023/monosemantic-features/index.html}.

\bibitem[Burago et~al.(2001)Burago, Burago, Ivanov, and {others}]{burago_course_2001}
Dmitri Burago, Yuri Burago, Sergei Ivanov, and {others}.
\newblock \emph{A course in metric geometry}.
\newblock American Mathematical Society, 2001.

\bibitem[Bussmann et~al.(2024)Bussmann, Pearce, Leask, Bloom, Sharkey, and Nanda]{bussmann_showing_2024}
Bart Bussmann, Michael Pearce, Patrick Leask, Joseph Bloom, Lee Sharkey, and Neel Nanda.
\newblock Showing {SAE} {Latents} {Are} {Not} {Atomic} {Using} {Meta}-{SAEs}.
\newblock \emph{Less Wrong}, 2024.
\newblock URL \url{https://www.lesswrong.com/posts/TMAmHh4DdMr4nCSr5/showing-sae-latents-are-not-atomic-using-meta-saes}.

\bibitem[Bussmann et~al.(2025)Bussmann, Nabeshima, Karvonen, and Nanda]{bussmann_learning_2025}
Bart Bussmann, Noa Nabeshima, Adam Karvonen, and Neel Nanda.
\newblock Learning {Multi}-{Level} {Features} with {Matryoshka} {Sparse} {Autoencoders}.
\newblock \emph{arXiv preprint arXiv:2503.17547}, 2025.

\bibitem[Cai et~al.(2021)Cai, Huang, Bian, and Church]{cai_isotropy_2021}
Xingyu Cai, Jiaji Huang, Yuchen Bian, and Kenneth Church.
\newblock Isotropy in the contextual embedding space: {Clusters} and manifolds.
\newblock In \emph{International conference on learning representations}, 2021.

\bibitem[Chang et~al.(2022)Chang, Tu, and Bergen]{chang_geometry_2022}
Tyler~A. Chang, Zhuowen Tu, and Benjamin~K. Bergen.
\newblock The {Geometry} of {Multilingual} {Language} {Model} {Representations}.
\newblock In Yoav Goldberg, Zornitsa Kozareva, and Yue Zhang, editors, \emph{Proceedings of the 2022 {Conference} on {Empirical} {Methods} in {Natural} {Language} {Processing}, {EMNLP} 2022, {Abu} {Dhabi}, {United} {Arab} {Emirates}, {December} 7-11, 2022}, pages 119--136. Association for Computational Linguistics, 2022.
\newblock \doi{10.18653/V1/2022.EMNLP-MAIN.9}.
\newblock URL \url{https://doi.org/10.18653/v1/2022.emnlp-main.9}.

\bibitem[Chatterjee(2021)]{chatterjee_new_2021}
Sourav Chatterjee.
\newblock A new coefficient of correlation.
\newblock \emph{Journal of the American Statistical Association}, 116\penalty0 (536):\penalty0 2009--2022, 2021.
\newblock Publisher: Taylor \& Francis.

\bibitem[Cunningham et~al.(2023)Cunningham, Ewart, Riggs, Huben, and Sharkey]{cunningham_sparse_2023}
Hoagy Cunningham, Aidan Ewart, Logan Riggs, Robert Huben, and Lee Sharkey.
\newblock Sparse autoencoders find highly interpretable features in language models.
\newblock \emph{arXiv preprint arXiv:2309.08600}, 2023.

\bibitem[Elad(2010)]{elad_sparse_2010}
Michael Elad.
\newblock \emph{Sparse and redundant representations: from theory to applications in signal and image processing}.
\newblock Springer Science \& Business Media, 2010.

\bibitem[Elhage et~al.(2021)Elhage, Nanda, Olsson, Henighan, Joseph, Mann, Askell, Bai, Chen, Conerly, DasSarma, Drain, Ganguli, Hatfield-Dodds, Hernandez, Jones, Kernion, Lovitt, Ndousse, Amodei, Brown, Clark, Kaplan, McCandlish, and Olah]{elhage_mathematical_2021}
Nelson Elhage, Neel Nanda, Catherine Olsson, Tom Henighan, Nicholas Joseph, Ben Mann, Amanda Askell, Yuntao Bai, Anna Chen, Tom Conerly, Nova DasSarma, Dawn Drain, Deep Ganguli, Zac Hatfield-Dodds, Danny Hernandez, Andy Jones, Jackson Kernion, Liane Lovitt, Kamal Ndousse, Dario Amodei, Tom Brown, Jack Clark, Jared Kaplan, Sam McCandlish, and Chris Olah.
\newblock A {Mathematical} {Framework} for {Transformer} {Circuits}.
\newblock \emph{Transformer Circuits Thread}, 2021.
\newblock URL \url{https://transformer-circuits.pub/2021/framework/index.html}.

\bibitem[Elhage et~al.(2022)Elhage, Hume, Olsson, Schiefer, Henighan, Kravec, Hatfield-Dodds, Lasenby, Drain, Chen, Grosse, McCandlish, Kaplan, Amodei, Wattenberg, and Olah]{elhage_toy_2022}
Nelson Elhage, Tristan Hume, Catherine Olsson, Nicholas Schiefer, Tom Henighan, Shauna Kravec, Zac Hatfield-Dodds, Robert Lasenby, Dawn Drain, Carol Chen, Roger Grosse, Sam McCandlish, Jared Kaplan, Dario Amodei, Martin Wattenberg, and Christopher Olah.
\newblock Toy {Models} of {Superposition}.
\newblock \emph{Transformer Circuits Thread}, 2022.
\newblock URL \url{https://transformer-circuits.pub/2022/toy_model/index.html}.

\bibitem[Engels et~al.(2025)Engels, Michaud, Liao, Gurnee, and Tegmark]{engels_not_2025}
Joshua Engels, Eric~J. Michaud, Isaac Liao, Wes Gurnee, and Max Tegmark.
\newblock Not {All} {Language} {Model} {Features} {Are} {One}-{Dimensionally} {Linear}.
\newblock In \emph{The {Thirteenth} {International} {Conference} on {Learning} {Representations}, {ICLR} 2025, {Singapore}, {April} 24-28, 2025}. OpenReview.net, 2025.
\newblock URL \url{https://openreview.net/forum?id=d63a4AM4hb}.

\bibitem[Gao et~al.(2021)Gao, Yao, and Chen]{gao_simcse_2021}
Tianyu Gao, Xingcheng Yao, and Danqi Chen.
\newblock {SimCSE}: {Simple} {Contrastive} {Learning} of {Sentence} {Embeddings}.
\newblock In Marie-Francine Moens, Xuanjing Huang, Lucia Specia, and Scott Wen-tau Yih, editors, \emph{Proceedings of the 2021 {Conference} on {Empirical} {Methods} in {Natural} {Language} {Processing}, {EMNLP} 2021, {Virtual} {Event} / {Punta} {Cana}, {Dominican} {Republic}, 7-11 {November}, 2021}, pages 6894--6910. Association for Computational Linguistics, 2021.
\newblock \doi{10.18653/V1/2021.EMNLP-MAIN.552}.
\newblock URL \url{https://doi.org/10.18653/v1/2021.emnlp-main.552}.

\bibitem[Genovese et~al.(2012)Genovese, Perone-Pacifico, Verdinelli, and Wasserman]{genovese2012minimax}
Christopher~R Genovese, Marco Perone-Pacifico, Isabella Verdinelli, and Larry Wasserman.
\newblock Minimax manifold estimation.
\newblock \emph{The Journal of Machine Learning Research}, 13\penalty0 (1):\penalty0 1263--1291, 2012.

\bibitem[Gorton(2024)]{gorton_curve_2024}
Liv Gorton.
\newblock Curve {Detector} {Manifolds} in {InceptionV1}, August 2024.
\newblock URL \url{https://livgorton.com/curve-detector-manifolds/}.

\bibitem[Gurnee and Tegmark(2023)]{gurnee_language_2023}
Wes Gurnee and Max Tegmark.
\newblock Language models represent space and time.
\newblock \emph{arXiv preprint arXiv:2310.02207}, 2023.

\bibitem[Hanna et~al.(2023)Hanna, Zamparelli, Mareček, and {others}]{hanna_functional_2023}
Michael Hanna, Roberto Zamparelli, David Mareček, and {others}.
\newblock The functional relevance of probed information: {A} case study.
\newblock In \emph{Proceedings of the 17th {Conference} of the {European} {Chapter} of the {Association} for {Computational} {Linguistics}}, pages 835--848. Association for Computational Linguistics, 2023.

\bibitem[He et~al.(2024)He, Doshi, Das, and Gromov]{he_learning_2024}
Tianyu He, Darshil Doshi, Aritra Das, and Andrey Gromov.
\newblock Learning to grok: {Emergence} of in-context learning and skill composition in modular arithmetic tasks.
\newblock \emph{Advances in Neural Information Processing Systems}, 37:\penalty0 13244--13273, 2024.

\bibitem[Higgins et~al.(2018)Higgins, Amos, Pfau, Racaniere, Matthey, Rezende, and Lerchner]{higgins_towards_2018}
Irina Higgins, David Amos, David Pfau, Sebastien Racaniere, Loic Matthey, Danilo Rezende, and Alexander Lerchner.
\newblock Towards a definition of disentangled representations.
\newblock \emph{arXiv preprint arXiv:1812.02230}, 2018.

\bibitem[Hyvärinen et~al.(2024)Hyvärinen, Khemakhem, and Monti]{hyvarinen_identifiability_2024}
Aapo Hyvärinen, Ilyes Khemakhem, and Ricardo Monti.
\newblock Identifiability of latent-variable and structural-equation models: from linear to nonlinear.
\newblock \emph{Annals of the Institute of Statistical Mathematics}, 76\penalty0 (1):\penalty0 1--33, 2024.
\newblock Publisher: Springer.

\bibitem[Leask et~al.(2024)Leask, Bussmann, and Nanda]{leask_calendar_2024}
Patrick Leask, Bart Bussmann, and Neel Nanda.
\newblock Calendar feature geometry in {GPT}-2 layer 8 residual stream {SAEs}.
\newblock \emph{Less Wrong}, 2024.
\newblock URL \url{https://www.lesswrong.com/posts/WsPyunwpXYCM2iN6t/calendar-feature-geometry-in-gpt-2-layer-8-residual-stream}.

\bibitem[Li et~al.(2020)Li, Zhou, He, Wang, Yang, and Li]{li_sentence_2020}
Bohan Li, Hao Zhou, Junxian He, Mingxuan Wang, Yiming Yang, and Lei Li.
\newblock On the {Sentence} {Embeddings} from {Pre}-trained {Language} {Models}.
\newblock In Bonnie Webber, Trevor Cohn, Yulan He, and Yang Liu, editors, \emph{Proceedings of the 2020 {Conference} on {Empirical} {Methods} in {Natural} {Language} {Processing}, {EMNLP} 2020, {Online}, {November} 16-20, 2020}, pages 9119--9130. Association for Computational Linguistics, 2020.
\newblock \doi{10.18653/V1/2020.EMNLP-MAIN.733}.
\newblock URL \url{https://doi.org/10.18653/v1/2020.emnlp-main.733}.

\bibitem[Li et~al.(2023)Li, Hopkins, Bau, Viégas, Pfister, and Wattenberg]{li_emergent_2023}
Kenneth Li, Aspen~K Hopkins, David Bau, Fernanda Viégas, Hanspeter Pfister, and Martin Wattenberg.
\newblock Emergent world representations: {Exploring} a sequence model trained on a synthetic task.
\newblock \emph{ICLR}, 2023.
\newblock Publisher: ICLR.

\bibitem[Li et~al.(2025)Li, Michaud, Baek, Engels, Sun, and Tegmark]{li_geometry_2025}
Yuxiao Li, Eric~J Michaud, David~D Baek, Joshua Engels, Xiaoqing Sun, and Max Tegmark.
\newblock The geometry of concepts: {Sparse} autoencoder feature structure.
\newblock \emph{Entropy}, 27\penalty0 (4):\penalty0 344, 2025.
\newblock Publisher: MDPI.

\bibitem[Lin(2024)]{neuronpedia}
Johnny Lin.
\newblock Neuronpedia: Interactive reference and tooling for analyzing neural networks, 2024.
\newblock URL \url{https://www.neuronpedia.org}.

\bibitem[Liu et~al.(2022)Liu, Kitouni, Nolte, Michaud, Tegmark, and Williams]{liu_towards_2022}
Ziming Liu, Ouail Kitouni, Niklas~S Nolte, Eric Michaud, Max Tegmark, and Mike Williams.
\newblock Towards understanding grokking: {An} effective theory of representation learning.
\newblock \emph{Advances in Neural Information Processing Systems}, 35:\penalty0 34651--34663, 2022.

\bibitem[Mendel(2024)]{mendel_sae_2024}
Jake Mendel.
\newblock {SAE} feature geometry is outside the superposition hypothesis.
\newblock \emph{AI Alignment Forum}, 2024.
\newblock URL \url{https://www.alignmentforum.org/posts/MFBTjb2qf3ziWmzz6/sae-feature-geometry-is-outside-the-superposition-hypothesis}.

\bibitem[Munroe(2010)]{munroe_xkcd_2010}
Randall Munroe.
\newblock {XKCD} {Color} {Name} {Survey} {Results}, 2010.
\newblock URL \url{https://xkcd.com/color/rgb/}.

\bibitem[Nanda et~al.(2023{\natexlab{a}})Nanda, Chan, Lieberum, Smith, and Steinhardt]{nanda_progress_2023}
Neel Nanda, Lawrence Chan, Tom Lieberum, Jess Smith, and Jacob Steinhardt.
\newblock Progress measures for grokking via mechanistic interpretability.
\newblock In \emph{The {Eleventh} {International} {Conference} on {Learning} {Representations}, {ICLR} 2023, {Kigali}, {Rwanda}, {May} 1-5, 2023}. OpenReview.net, 2023{\natexlab{a}}.
\newblock URL \url{https://openreview.net/forum?id=9XFSbDPmdW}.

\bibitem[Nanda et~al.(2023{\natexlab{b}})Nanda, Lee, and Wattenberg]{nanda_emergent_2023}
Neel Nanda, Andrew Lee, and Martin Wattenberg.
\newblock Emergent linear representations in world models of self-supervised sequence models.
\newblock \emph{arXiv preprint arXiv:2309.00941}, 2023{\natexlab{b}}.

\bibitem[Olah(2024)]{olah_what_2024}
Chris Olah.
\newblock What is a {Linear} {Representation}? {What} is a {Multidimensional} {Feature}?
\newblock \emph{Transformer Circuits Thread}, 2024.
\newblock URL \url{https://transformer-circuits.pub/2024/july-update/index.html#linear-representations}.

\bibitem[Olah and Batson(2024)]{olah_feature_2024}
Chris Olah and Josh Batson.
\newblock Feature {Manifold} {Toy} {Model}.
\newblock \emph{Transformer Circuits Thread}, 2024.
\newblock URL \url{https://transformer-circuits.pub/2023/may-update/index.html#feature-manifolds}.

\bibitem[Park et~al.(2024)Park, Choe, Jiang, and Veitch]{park_geometry_2024}
Kiho Park, Yo~Joong Choe, Yibo Jiang, and Victor Veitch.
\newblock The geometry of categorical and hierarchical concepts in large language models.
\newblock \emph{arXiv preprint arXiv:2406.01506}, 2024.

\bibitem[Pfau et~al.(2020)Pfau, Higgins, Botev, and Racanière]{pfau_disentangling_2020}
David Pfau, Irina Higgins, Alex Botev, and Sébastien Racanière.
\newblock Disentangling by subspace diffusion.
\newblock \emph{Advances in Neural Information Processing Systems}, 33:\penalty0 17403--17415, 2020.

\bibitem[Radford et~al.(2019)Radford, Wu, Child, Luan, Amodei, and Sutskever]{radford_language_2019}
Alec Radford, Jeff Wu, Rewon Child, David Luan, Dario Amodei, and Ilya Sutskever.
\newblock Language {Models} are {Unsupervised} {Multitask} {Learners}.
\newblock 2019.

\bibitem[Reizinger et~al.(2025)Reizinger, Bizeul, Juhos, Vogt, Balestriero, Brendel, and Klindt]{reizinger_cross-entropy_2025}
Patrik Reizinger, Alice Bizeul, Attila Juhos, Julia~E. Vogt, Randall Balestriero, Wieland Brendel, and David~A. Klindt.
\newblock Cross-{Entropy} {Is} {All} {You} {Need} {To} {Invert} the {Data} {Generating} {Process}.
\newblock In \emph{The {Thirteenth} {International} {Conference} on {Learning} {Representations}, {ICLR} 2025, {Singapore}, {April} 24-28, 2025}. OpenReview.net, 2025.
\newblock URL \url{https://openreview.net/forum?id=hrqNOxpItr}.

\bibitem[Shai et~al.(2024)Shai, Teixeira, Oldenziel, Marzen, and Riechers]{shai_transformers_2024}
Adam~S. Shai, Lucas Teixeira, Alexander~Gietelink Oldenziel, Sarah Marzen, and Paul~M. Riechers.
\newblock Transformers {Represent} {Belief} {State} {Geometry} in their {Residual} {Stream}.
\newblock In Amir Globersons, Lester Mackey, Danielle Belgrave, Angela Fan, Ulrich Paquet, Jakub~M. Tomczak, and Cheng Zhang, editors, \emph{Advances in {Neural} {Information} {Processing} {Systems} 38: {Annual} {Conference} on {Neural} {Information} {Processing} {Systems} 2024, {NeurIPS} 2024, {Vancouver}, {BC}, {Canada}, {December} 10 - 15, 2024}, 2024.
\newblock URL \url{http://papers.nips.cc/paper\_files/paper/2024/hash/8936fa1691764912d9519e1b5673ea66-Abstract-Conference.html}.

\bibitem[Smith(2024)]{smith_strong_2024}
Lewis Smith.
\newblock The ‘strong’ feature hypothesis could be wrong.
\newblock \emph{AI Alignment Forum}, 2024.
\newblock URL \url{https://www.alignmentforum.org/posts/tojtPCCRpKLSHBdpn/the-strong-feature-hypothesis-could-be-wrong}.

\bibitem[Smolensky(1990)]{smolensky_tensor_1990}
Paul Smolensky.
\newblock Tensor product variable binding and the representation of symbolic structures in connectionist systems.
\newblock \emph{Artificial intelligence}, 46\penalty0 (1-2):\penalty0 159--216, 1990.
\newblock Publisher: Elsevier.

\bibitem[Soares and Fallenstein(2017)]{soares_agent_2017}
Nate Soares and Benya Fallenstein.
\newblock Agent foundations for aligning machine intelligence with human interests: a technical research agenda.
\newblock \emph{The technological singularity: Managing the journey}, pages 103--125, 2017.
\newblock Publisher: Springer.

\bibitem[Sutherland(2009)]{sutherland_introduction_2009}
Wilson~A. Sutherland.
\newblock \emph{Introduction to {Metric} and {Topological} {Spaces}}.
\newblock Oxford University Press, 2009.

\bibitem[Templeton et~al.(2024)Templeton, Conerly, Marcus, Lindsey, Bricken, Chen, Pearce, Citro, Ameisen, Jones, Cunningham, Turner, McDougall, MacDiarmid, Tamkin, Durmus, Hume, Mosconi, Freeman, Sumers, Rees, Batson, Jermyn, Carter, Olah, and Henighan]{templeton_scaling_2024}
Adly Templeton, Tom Conerly, Jonathan Marcus, Jack Lindsey, Trenton Bricken, Brian Chen, Adam Pearce, Craig Citro, Emmanuel Ameisen, Andy Jones, Hoagy Cunningham, Nicholas~L. Turner, Callum McDougall, Monte MacDiarmid, Alex Tamkin, Esin Durmus, Tristan Hume, Francesco Mosconi, C.~Daniel Freeman, Theodore~R. Sumers, Edward Rees, Joshua Batson, Adam Jermyn, Shan Carter, Christopher Olah, and Tom Henighan.
\newblock Scaling {Monosemanticity}: {Extracting} {Interpretable} {Features} from {Claude} 3 {Sonnet}.
\newblock \emph{Transformer Circuits Thread}, 2024.
\newblock URL \url{https://transformer-circuits.pub/2024/scaling-monosemanticity/}.

\bibitem[Wang et~al.(2023)Wang, Fu, Du, Gao, Huang, Liu, Chandak, Liu, Van~Katwyk, Deac, Anandkumar, Bergen, Gomes, Ho, Kohli, Lasenby, Leskovec, Liu, Manrai, Marks, Ramsundar, Song, Sun, Tang, Veličković, Welling, Zhang, Coley, Bengio, and Zitnik]{wang_scientific_2023}
Hanchen Wang, Tianfan Fu, Yuanqi Du, Wenhao Gao, Kexin Huang, Ziming Liu, Payal Chandak, Shengchao Liu, Peter Van~Katwyk, Andreea Deac, Anima Anandkumar, Karianne Bergen, Carla~P. Gomes, Shirley Ho, Pushmeet Kohli, Joan Lasenby, Jure Leskovec, Tie{\textbackslash}-Yan Liu, Arjun Manrai, Debora Marks, Bharath Ramsundar, Le~Song, Jimeng Sun, Jian Tang, Petar Veličković, Max Welling, Linfeng Zhang, Connor~W. Coley, Yoshua Bengio, and Marinka Zitnik.
\newblock Scientific discovery in the age of artificial intelligence.
\newblock \emph{Nature}, 620\penalty0 (7972):\penalty0 47--60, August 2023.
\newblock \doi{10.1038/s41586-023-06221-2}.
\newblock URL \url{https://www.nature.com/articles/s41586-023-06221-2}.

\bibitem[Zhong et~al.(2023)Zhong, Liu, Tegmark, and Andreas]{zhong_clock_2023}
Ziqian Zhong, Ziming Liu, Max Tegmark, and Jacob Andreas.
\newblock The clock and the pizza: {Two} stories in mechanistic explanation of neural networks.
\newblock \emph{Advances in neural information processing systems}, 36:\penalty0 27223--27250, 2023.

\bibitem[Zimmermann et~al.(2021)Zimmermann, Sharma, Schneider, Bethge, and Brendel]{zimmermann_contrastive_2021}
Roland~S Zimmermann, Yash Sharma, Steffen Schneider, Matthias Bethge, and Wieland Brendel.
\newblock Contrastive learning inverts the data generating process.
\newblock In \emph{International conference on machine learning}, pages 12979--12990. PMLR, 2021.

\end{thebibliography}
\appendix



\newpage

\vspace{2em}

\begin{center}
    {\Large \bf
        Appendix
    }
\end{center}
\vspace{1em}

\section{Code to reproduce the experiments in this paper}

Code to reproduce the experiments in this paper is made available at the GitHub repository \url{https://github.com/alexandermodell/Representation-Manifolds}.

\section{Supporting definitions and proof of Theorem~1}
Hypotheses~\ref{hyp:correspondence} and \ref{hyp:cossim}, and the statement of the Theorem~\ref{thm:isometry} concern the metric space $(\*Z_{\f},\dist_\f)$, and functions $\phi_\f$ and $g_\f$ associated with some particular feature $\f$. To de-clutter the proof we remove dependence on $\f$ from the notation, and write simply  $(\*Z ,\dist )$, $\phi$ and $g$.

We shall need the following definitions, informed by \citep{burago_course_2001}. A \emph{path} in $\mathcal{Z}$
is a continuous mapping $\eta$ from some interval $[a,b]$ to $\mathcal{Z}$.
The length of such a path is:
\[
L(\eta)\coloneqq\sup_{\mathcal{T}}\sum_{i=1}^{n}\dist(\eta_{t_{i}},\eta_{t_{i-1}})
\]
where the supremum is over all $\mathcal{T}=(t_{0},t_{1},\ldots,t_{n})$
such that $n\geq1$ and $t_{0}=a\leq t_{1}\leq\cdots\leq t_{n}=b$.
A path is said to be of \emph{finite length}, or equivalently called \emph{rectifiable}, if $L(\eta)<\infty$. For $[a^{\prime},b^{\prime}]\subseteq[a,b]$
we write $L(\eta,a^{\prime},b^{\prime})$ for the length of the restriction
of $\eta$ to $[a^{\prime},b^{\prime}]$.

Any rectifiable path $\eta:[a,b]\to\mathcal{Z}$ admits a \emph{unit-speed
parameterisation,} meaning $\eta$ has the representation $\eta=\tilde{\eta}\circ\varphi$
where $\tilde{\eta}:[0,L(\eta)]\to\mathcal{Z}$ is a path, $\varphi$
is a continuous, nondecreasing map from $[a,b]$ to $[0,L(\eta)]$,
and $L(\tilde{\eta},s,t)=t-s$ \cite[Prop. 2.5.9]{burago_course_2001}. Adopting this parameterisation does not change overall length of the path,
in the sense that the stated properties of $\tilde{\eta}$ imply $L(\tilde{\eta})=L(\tilde{\eta},0,L(\eta))=L(\eta)$. 

We shall also consider paths on the unit hyper-sphere $\mathbb{S}^{D-1}\coloneqq\{x\in\mathbb{R}^{D-1}:\|x\|_2=1\}$.
The length of such a path, i.e., a continuous mapping $\gamma:[a,b]\to\mathbb{S}^{D-1}$
is:
\[
L(\gamma)\coloneqq\sup_{\mathcal{T}}\sum_{i=1}^{n}\|\gamma_{t_{i}}-\gamma_{t_{i-1}}\|_2,
\]
where again the supremum is over all $\mathcal{T}=(t_{0},t_{1},\ldots,t_{n})$
such that $n\geq1$ and $t_{0}=a\leq t_{1}\leq\cdots\leq t_{n}=b$.

\begin{proof}[Proof of Theorem~\ref{thm:isometry}]

Since the claim of the theorem depends on $\eta$ only through its
length, we can assume w.l.o.g. that we are considering the unit-speed
parameterisation of $\eta$. That is $[a,b]=[0,L(\eta)]$ and $\eta:[0,L(\eta)]\to\mathcal{Z}$
with $L(\eta,s,t)=t-s$. Under Hypothesis~\ref{hyp:correspondence}, $\phi$ is continuous, hence $\gamma:[0,L(\eta)]\to\mathbb{S}^{D-1}$ defined by $\gamma_t=\phi(\eta_t)$ is a path on $\mathbb{S}^{D-1}$.

For any $\mathcal{T}=(t_{0},t_{1},,\ldots,t_{n})$
such that $n\geq1$ and $t_{0}=0\leq t_{1}\leq\cdots\leq t_{n}=L(\eta)$,
introduce the notation:
\[
S(\eta,\mathcal{T})\coloneqq\sum_{i=1}^{n}\dist(\eta_{t_{i}},\eta_{t_{i-1}}),\qquad S(\gamma,\mathcal{T})\coloneqq\sum_{i=1}^{n}\|\gamma_{t_{i}}-\gamma_{t_{i-1}}\|_2.
\]

Fix any $\delta>0$. We shall prove that there exists $\mathcal{T}_{\delta}$
such that : 
\begin{align}
\left|L(\gamma)-\sqrt{-2g^{\prime}(0)}L(\eta)\right| & \leq\left|L(\gamma)-S(\gamma,\mathcal{T}_{\delta})\right|\label{eq:proof_term_1-1}\\
 & \quad+\left|S(\gamma,\mathcal{T}_{\delta})-\sqrt{-2g^{\prime}(0)}S(\eta,\mathcal{T}_{\delta})\right|\label{eq:proof_term_2-1}\\
 & \quad+\sqrt{-2g^{\prime}(0)}\left|S(\eta,\mathcal{T}_{\delta})-L(\eta)\right|\label{eq:proof_term_3-1}\\
 & \leq\frac{\delta}{3}+\frac{\delta}{3}+\frac{\delta}{3},\label{eq:eps_inequality-1}
\end{align}
which implies the claim of the theorem. We shall construct $\mathcal{T}_{\delta}$
in the form $\mathcal{T}_{\delta}\coloneqq\mathcal{T}_{\delta}^{(1)}\cup\mathcal{T}_{\delta}^{(2)}\cup\mathcal{T}_{\delta}^{(3)}$,
i.e., $\mathcal{T}_{\delta}^{(i)}\subseteq\mathcal{T}_{\delta}$
for $i=1,2,3$, where $\mathcal{T}_{\delta}^{(i)}$ are defined
in the remainder of the proof.

We first consider a difference of the form $\left|S(\gamma,\cdot)-\sqrt{-2g^{\prime}(0)}S(\eta,\cdot)\right|$ as appears in (\ref{eq:proof_term_2-1}). Noting that the mapping $\phi$ by definition satisfies $\|\phi(z)\|_2=1$ for all $z$, under Hypothesis~\ref{hyp:cossim}, there exists $\epsilon>0$ such that if $\dist(z,z^{\prime})<\epsilon$,
then $\left\langle \phi(z),\phi(z^{\prime})\right\rangle_2 =g(\dist(z,z^{\prime})^{2})$.
Let $C>0$ be any finite constant such that $\sup_{r\leq\epsilon}|g^{\prime\prime}(r)|\leq C$.
Such a constant exists because $g$ is $C^{2}$ by assumption. 

Let $\mathcal{T}_{\delta}^{(2)}=(t_{0}^{(2)}=0,t_{1}^{(2)},\ldots,t_{n^{(2)}}^{(2)}=L(\eta))$
be defined by: 
\[
n^{(2)}\coloneqq\left\lceil \frac{3C|L(\eta)|^{2}}{\delta}\vee\frac{L(\eta)}{\epsilon}\right\rceil ,\qquad t_{i}^{(2)}\coloneqq\frac{i}{n^{(2)}}L(\eta),\quad i=0,\ldots,n^{(2)}.
\]
Using the fact that $\eta$ is unit-speed parameterised, it follows
that, for $1\leq i\leq n^{(2)}$,
\begin{equation}
L(\eta,t_{i}^{(2)},t_{i-1}^{(2)})=t_{i}^{(2)}-t_{i-1}^{(2)}=\frac{L(\eta)}{n^{(2)}}\leq\frac{\delta}{3CL(\eta)}\wedge\epsilon.\label{eq:L_eta-1}
\end{equation}

Now consider any $\mathcal{T}=(t_{0},t_{1},,\ldots,t_{n})$ with $n\geq n^{(2)}$,
$t_{0}=0$ , $t_{n}=L(\eta)$ such that $\mathcal{T}_{\delta}^{(2)}\subseteq\mathcal{T}$.
Unit-speed parameterisation of $\eta$ combined with $\mathcal{T}_{\delta}^{(2)}\subseteq\mathcal{T}$
implies:
\[
\max_{1\leq i\leq n}L(\eta,t_{i},t_{i-1})=\max_{1\leq i\leq n}t_{i}-t_{i-1}\leq\max_{1\leq i\leq n^{(2)}}t_{i}^{(2)}-t_{i-1}^{(2)}=\max_{1\leq i\leq n^{(2)}}L(\eta,t_{i}^{(2)},t_{i-1}^{(2)}),
\]
and it follows from the definition of length and the triangle inequality
that $\dist(\eta_{t_{i}},\eta_{t_{i-1}})\leq L(\eta,t_{i},t_{i-1})$ for
all $i=1,\ldots,n$. Therefore using (\ref{eq:L_eta-1}), we have:
\begin{equation}
\max_{1\leq i\leq n}\dist(\eta_{t_{i}},\eta_{t_{i-1}})\leq\frac{\delta}{3CL(\eta)}\wedge\epsilon.\label{eq:max_d_bound_delta}
\end{equation}

Using $\|\phi(z)\|_2=1$ for all $z$, $\gamma_{t}=\phi(\eta_{t})$,
the upper bound by $\epsilon$ in (\ref{eq:max_d_bound_delta}) to enable the substitution $\left\langle \phi(\eta_{t_{i}}),\phi(\eta_{t_{i-1}})\right\rangle_2 = g\left(\dist(\eta_{t_{i}},\eta_{t_{i-1}})^{2}\right)$, and
taking a Taylor expansion of $g$ about zero, we have:   
\begin{align}
\frac{1}{2}\|\gamma_{t_{i}}-\gamma_{t_{i-1}}\|_2^{2} & =1-\left\langle \gamma_{t_{i}},\gamma_{t_{i-1}}\right\rangle_2 \nonumber \\
 & =1-\left\langle \phi(\eta_{t_{i}}),\phi(\eta_{t_{i-1}})\right\rangle_2 \\
 & =g(0)-g\left(d(\eta_{t_{i}},\eta_{t_{i-1}})^{2}\right)\nonumber \\
 & =-g^{\prime}(0)\dist(\eta_{t_{i}},\eta_{t_{i-1}})^{2}-\frac{g^{\prime\prime}(c_{i})}{2}\dist(\eta_{t_{i}},\eta_{t_{i-1}})^{4},\label{eq:inc_distance-1}
\end{align}
 where $c_{i}$ is some point in the interval $[0,\dist(\eta_{t_{i}},\eta_{t_{i-1}})^{2}]$. 

Under Hypothesis~\ref{hyp:cossim}, we have $g^{\prime}(0)<0$. Then using (\ref{eq:inc_distance-1}) and lemma \ref{lem:ab} with $\alpha=\|\gamma_{t_{i}}-\gamma_{t_{i-1}}\|_2$  and $\beta=\sqrt{-2g^{\prime}(0)}\dist(\eta_{t_{i}},\eta_{t_{i-1}})$, 
\[
\left|\|\gamma_{t_{i}}-\gamma_{t_{i-1}}\|-\sqrt{-2g^{\prime}(0)}\dist(\eta_{t_{i}},\eta_{t_{i-1}})\right|\leq|g^{\prime\prime}(c_{i})|^{1/2}\dist(\eta_{t_{i}},\eta_{t_{i-1}})^{2},
\]
 so that 
\begin{align*}
\left|S(\gamma,\mathcal{T})-\sqrt{-2g^{\prime}(0)}S(\eta,\mathcal{T})\right| & \leq\sum_{i=1}^{n}\left|\|\gamma_{t_{i}}-\gamma_{t_{i-1}}\|_2-\sqrt{-2g^{\prime}(0)}\dist(\eta_{t_{i}},\eta_{t_{i-1}})\right|\\
 & \leq\sum_{i=1}^{n}|g^{\prime\prime}(c_{i})|^{1/2}\dist(\eta_{t_{i}},\eta_{t_{i-1}})^{2}\\
 & \leq C\left(\max_{1\leq i\leq n}\dist(\eta_{t_{i}},\eta_{t_{i-1}})\right)\sum_{i=1}^{n}\dist(\eta_{t_{i}},\eta_{t_{i-1}}).\\
 & \leq CL(\eta)\max_{1\leq i\leq n}\dist(\eta_{t_{i}},\eta_{t_{i-1}})\leq\frac{\delta}{3},
\end{align*}
where the final inequality uses (\ref{eq:max_d_bound_delta}). In summary, we have shown that
\begin{equation}
\mathcal{T}_{\delta}^{(2)}\subseteq\mathcal{T}\quad\Rightarrow\quad\left|S(\gamma,\mathcal{T})-\sqrt{-2g^{\prime}(0)}S(\eta,\mathcal{T})\right|\leq\frac{\delta}{3}.\label{eq:T^2_bound-1}
\end{equation}

Now consider $\left|L(\gamma)-S(\gamma,\cdot)\right|$ as appears in (\ref{eq:proof_term_1-1}). By the definition of $L(\gamma)$,
there exists $\mathcal{T}_{\delta}^{(1)}=(t_{0}^{(1)}=0,t_{1}^{(1)},\ldots,t_{n^{(1)}}^{(1)}=L(\eta))$
such that:
\[
L(\gamma)-\frac{\delta}{3}\leq S(\gamma,\mathcal{T}_{\delta}^{(1)})\leq L(\gamma).
\]
By applying the triangle inequality to the summands in $S(\gamma,\mathcal{T}_{\delta}^{(1)})$
we have for any $\mathcal{T}$ with $\mathcal{T}_{\delta}^{(1)}\subseteq\mathcal{T}$,
$S(\gamma,\mathcal{T}_{\delta}^{(1)})\leq S(\gamma,\mathcal{T})\leq L(\gamma)$,
hence 
\begin{equation}
\mathcal{T}_{\delta}^{(1)}\subseteq\mathcal{T}\quad\Rightarrow\quad\left|L(\gamma)-S(\gamma,\mathcal{T})\right|\leq\frac{\delta}{3}.\label{eq:T^1_bound-1}
\end{equation}

Now consider $\sqrt{-2g^{\prime}(0)}\left|S(\eta,\cdot)-L(\eta)\right|$ as appears in (\ref{eq:proof_term_3-1}). By similar arguments to those used above do establish \eqref{eq:T^1_bound-1},
there exists $\mathcal{T}_{\delta}^{(3)}$ such that 
\begin{equation}
\mathcal{T}_{\delta}^{(3)}\subseteq\mathcal{T}\quad\Rightarrow\quad\sqrt{-2g^{\prime}(0)}\left|L(\eta)-S(\eta,\mathcal{T})\right|\leq\frac{\delta}{3}.\label{eq:T^3_bound-1}
\end{equation}

With $\mathcal{T}_{\delta}\coloneqq\mathcal{T}_{\delta}^{(1)}\cup\mathcal{T}_{\delta}^{(2)}\cup\mathcal{T}_{\delta}^{(3)}$,
the implications (\ref{eq:T^2_bound-1}), (\ref{eq:T^1_bound-1}),
(\ref{eq:T^3_bound-1}) together tell us that the inequality (\ref{eq:eps_inequality-1})
holds, and this completes the proof of the theorem.

\end{proof}

\begin{lem}\label{lem:ab}
For any $\alpha,\beta\geq 0$, $  |\alpha-\beta|\leq |\alpha^2 - \beta^2|^{1/2}$.
\end{lem}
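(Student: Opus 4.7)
The plan is to exploit the factorisation $\alpha^2-\beta^2=(\alpha-\beta)(\alpha+\beta)$ together with the elementary inequality $|\alpha-\beta|\le\alpha+\beta$, which holds precisely because both $\alpha$ and $\beta$ are nonnegative. The whole argument is then a one-line chain of inequalities, so the ``main obstacle'' is really just the bookkeeping to ensure the nonnegativity hypothesis is used where needed.

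First I would observe that since $\alpha+\beta\ge 0$, the factorisation gives
\[
|\alpha^{2}-\beta^{2}|=|\alpha-\beta|\,(\alpha+\beta).
\]
Next I would establish $|\alpha-\beta|\le\alpha+\beta$ by splitting into the two cases $\alpha\ge\beta$ and $\beta>\alpha$, each of which uses the nonnegativity of the other variable. Multiplying this inequality through by the nonnegative quantity $|\alpha-\beta|$ yields
\[
|\alpha-\beta|^{2}\le|\alpha-\beta|\,(\alpha+\beta)=|\alpha^{2}-\beta^{2}|.
\]
Finally I would take square roots of both sides, which is valid and order-preserving on $[0,\infty)$, to conclude $|\alpha-\beta|\le|\alpha^{2}-\beta^{2}|^{1/2}$.

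There is no substantive obstacle; the only points that require minor care are the two uses of nonnegativity (to interpret $\alpha+\beta$ as its own absolute value in the factorisation, and to bound $|\alpha-\beta|$ by $\alpha+\beta$). The degenerate case $\alpha=\beta$ is automatic, as both sides of the desired inequality vanish.
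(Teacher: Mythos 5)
Your proof is correct, but it takes a different route from the paper's. You use the difference-of-squares factorisation $|\alpha^{2}-\beta^{2}|=|\alpha-\beta|(\alpha+\beta)$ together with the bound $|\alpha-\beta|\le\alpha+\beta$ (valid since $\alpha,\beta\ge 0$), then multiply and take square roots. The paper instead assumes w.l.o.g.\ $\alpha\ge\beta$ and applies subadditivity of the square root, $\sqrt{a+b}\le\sqrt{a}+\sqrt{b}$ (phrased as the triangle inequality in $\mathbb{R}^2$), to $\alpha=\bigl(\beta^{2}+(\alpha^{2}-\beta^{2})\bigr)^{1/2}\le\beta+(\alpha^{2}-\beta^{2})^{1/2}$, which rearranges directly to the claim. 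Your version makes the role of nonnegativity more explicit (both in the factorisation and in bounding $|\alpha-\beta|$) and avoids the w.l.o.g.\ reduction; the paper's version is a slightly slicker one-liner once the right algebraic split is seen. Both are complete and elementary.
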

\begin{proof}
W.l.o.g., assume $\alpha\geq \beta$. Using the triangle inequality for the Euclidean norm in $\mathbb{R}^2$, $\alpha=(\beta^2 + \alpha^2 - \beta^2)^{1/2}\leq \beta +(\alpha^2-\beta^2)^{1/2}$, i.e., $\alpha-\beta \leq (\alpha^2-\beta^2)^{1/2}$  . 
\end{proof}

\ifispreprint\else

\newpage
\section*{NeurIPS Paper Checklist}

\begin{enumerate}
\item {\bf Claims}
    \item[] Question: Do the main claims made in the abstract and introduction accurately reflect the paper's contributions and scope?
    \item[] Answer: \answerYes{} 
    \item[] Justification: We describe our hypotheses, their implications, and our empirical investigation of these in the abstract and introduction.
    \item[] Guidelines:
    \begin{itemize}
        \item The answer NA means that the abstract and introduction do not include the claims made in the paper.
        \item The abstract and/or introduction should clearly state the claims made, including the contributions made in the paper and important assumptions and limitations. A No or NA answer to this question will not be perceived well by the reviewers. 
        \item The claims made should match theoretical and experimental results, and reflect how much the results can be expected to generalize to other settings. 
        \item It is fine to include aspirational goals as motivation as long as it is clear that these goals are not attained by the paper. 
    \end{itemize}

\item {\bf Limitations}
    \item[] Question: Does the paper discuss the limitations of the work performed by the authors?
    \item[] Answer: \answerYes{} 
    \item[] Justification: See Section~4.1.
    \item[] Guidelines:
    \begin{itemize}
        \item The answer NA means that the paper has no limitation while the answer No means that the paper has limitations, but those are not discussed in the paper. 
        \item The authors are encouraged to create a separate "Limitations" section in their paper.
        \item The paper should point out any strong assumptions and how robust the results are to violations of these assumptions (e.g., independence assumptions, noiseless settings, model well-specification, asymptotic approximations only holding locally). The authors should reflect on how these assumptions might be violated in practice and what the implications would be.
        \item The authors should reflect on the scope of the claims made, e.g., if the approach was only tested on a few datasets or with a few runs. In general, empirical results often depend on implicit assumptions, which should be articulated.
        \item The authors should reflect on the factors that influence the performance of the approach. For example, a facial recognition algorithm may perform poorly when image resolution is low or images are taken in low lighting. Or a speech-to-text system might not be used reliably to provide closed captions for online lectures because it fails to handle technical jargon.
        \item The authors should discuss the computational efficiency of the proposed algorithms and how they scale with dataset size.
        \item If applicable, the authors should discuss possible limitations of their approach to address problems of privacy and fairness.
        \item While the authors might fear that complete honesty about limitations might be used by reviewers as grounds for rejection, a worse outcome might be that reviewers discover limitations that aren't acknowledged in the paper. The authors should use their best judgment and recognize that individual actions in favor of transparency play an important role in developing norms that preserve the integrity of the community. Reviewers will be specifically instructed to not penalize honesty concerning limitations.
    \end{itemize}

\item {\bf Theory assumptions and proofs}
    \item[] Question: For each theoretical result, does the paper provide the full set of assumptions and a complete (and correct) proof?
    \item[] Answer: \answerYes{} 
    \item[] Justification: See Theorem~1 and its proof in Section~\ref{sec:proof}.
    \item[] Guidelines:
    \begin{itemize}
        \item The answer NA means that the paper does not include theoretical results. 
        \item All the theorems, formulas, and proofs in the paper should be numbered and cross-referenced.
        \item All assumptions should be clearly stated or referenced in the statement of any theorems.
        \item The proofs can either appear in the main paper or the supplemental material, but if they appear in the supplemental material, the authors are encouraged to provide a short proof sketch to provide intuition. 
        \item Inversely, any informal proof provided in the core of the paper should be complemented by formal proofs provided in appendix or supplemental material.
        \item Theorems and Lemmas that the proof relies upon should be properly referenced. 
    \end{itemize}

    \item {\bf Experimental result reproducibility}
    \item[] Question: Does the paper fully disclose all the information needed to reproduce the main experimental results of the paper to the extent that it affects the main claims and/or conclusions of the paper (regardless of whether the code and data are provided or not)?
    \item[] Answer: \answerYes{} 
    \item[] Justification: All representations studied are described or references. Methodology is described in detail throughout Sections 2 and 3.
    \item[] Guidelines:
    \begin{itemize}
        \item The answer NA means that the paper does not include experiments.
        \item If the paper includes experiments, a No answer to this question will not be perceived well by the reviewers: Making the paper reproducible is important, regardless of whether the code and data are provided or not.
        \item If the contribution is a dataset and/or model, the authors should describe the steps taken to make their results reproducible or verifiable. 
        \item Depending on the contribution, reproducibility can be accomplished in various ways. For example, if the contribution is a novel architecture, describing the architecture fully might suffice, or if the contribution is a specific model and empirical evaluation, it may be necessary to either make it possible for others to replicate the model with the same dataset, or provide access to the model. In general. releasing code and data is often one good way to accomplish this, but reproducibility can also be provided via detailed instructions for how to replicate the results, access to a hosted model (e.g., in the case of a large language model), releasing of a model checkpoint, or other means that are appropriate to the research performed.
        \item While NeurIPS does not require releasing code, the conference does require all submissions to provide some reasonable avenue for reproducibility, which may depend on the nature of the contribution. For example
        \begin{enumerate}
            \item If the contribution is primarily a new algorithm, the paper should make it clear how to reproduce that algorithm.
            \item If the contribution is primarily a new model architecture, the paper should describe the architecture clearly and fully.
            \item If the contribution is a new model (e.g., a large language model), then there should either be a way to access this model for reproducing the results or a way to reproduce the model (e.g., with an open-source dataset or instructions for how to construct the dataset).
            \item We recognize that reproducibility may be tricky in some cases, in which case authors are welcome to describe the particular way they provide for reproducibility. In the case of closed-source models, it may be that access to the model is limited in some way (e.g., to registered users), but it should be possible for other researchers to have some path to reproducing or verifying the results.
        \end{enumerate}
    \end{itemize}

\item {\bf Open access to data and code}
    \item[] Question: Does the paper provide open access to the data and code, with sufficient instructions to faithfully reproduce the main experimental results, as described in supplemental material?
    \item[] Answer: \answerNo{} 
    \item[] Justification: We will provide full open-sourced code to reproduce all the results in the paper upon acceptance.
    \item[] Guidelines:
    \begin{itemize}
        \item The answer NA means that paper does not include experiments requiring code.
        \item Please see the NeurIPS code and data submission guidelines (\url{https://nips.cc/public/guides/CodeSubmissionPolicy}) for more details.
        \item While we encourage the release of code and data, we understand that this might not be possible, so “No” is an acceptable answer. Papers cannot be rejected simply for not including code, unless this is central to the contribution (e.g., for a new open-source benchmark).
        \item The instructions should contain the exact command and environment needed to run to reproduce the results. See the NeurIPS code and data submission guidelines (\url{https://nips.cc/public/guides/CodeSubmissionPolicy}) for more details.
        \item The authors should provide instructions on data access and preparation, including how to access the raw data, preprocessed data, intermediate data, and generated data, etc.
        \item The authors should provide scripts to reproduce all experimental results for the new proposed method and baselines. If only a subset of experiments are reproducible, they should state which ones are omitted from the script and why.
        \item At submission time, to preserve anonymity, the authors should release anonymized versions (if applicable).
        \item Providing as much information as possible in supplemental material (appended to the paper) is recommended, but including URLs to data and code is permitted.
    \end{itemize}

\item {\bf Experimental setting/details}
    \item[] Question: Does the paper specify all the training and test details (e.g., data splits, hyperparameters, how they were chosen, type of optimizer, etc.) necessary to understand the results?
    \item[] Answer: \answerYes{} 
    \item[] Justification: Our exploratory analyses our straight-forward and explained in full.
    \item[] Guidelines:
    \begin{itemize}
        \item The answer NA means that the paper does not include experiments.
        \item The experimental setting should be presented in the core of the paper to a level of detail that is necessary to appreciate the results and make sense of them.
        \item The full details can be provided either with the code, in appendix, or as supplemental material.
    \end{itemize}

\item {\bf Experiment statistical significance}
    \item[] Question: Does the paper report error bars suitably and correctly defined or other appropriate information about the statistical significance of the experiments?
    \item[] Answer: \answerYes{} 
    \item[] Justification: We report correlation coefficients to quantify relationships shown in figures.
    \item[] Guidelines:
    \begin{itemize}
        \item The answer NA means that the paper does not include experiments.
        \item The authors should answer "Yes" if the results are accompanied by error bars, confidence intervals, or statistical significance tests, at least for the experiments that support the main claims of the paper.
        \item The factors of variability that the error bars are capturing should be clearly stated (for example, train/test split, initialization, random drawing of some parameter, or overall run with given experimental conditions).
        \item The method for calculating the error bars should be explained (closed form formula, call to a library function, bootstrap, etc.)
        \item The assumptions made should be given (e.g., Normally distributed errors).
        \item It should be clear whether the error bar is the standard deviation or the standard error of the mean.
        \item It is OK to report 1-sigma error bars, but one should state it. The authors should preferably report a 2-sigma error bar than state that they have a 96\% CI, if the hypothesis of Normality of errors is not verified.
        \item For asymmetric distributions, the authors should be careful not to show in tables or figures symmetric error bars that would yield results that are out of range (e.g. negative error rates).
        \item If error bars are reported in tables or plots, The authors should explain in the text how they were calculated and reference the corresponding figures or tables in the text.
    \end{itemize}

\item {\bf Experiments compute resources}
    \item[] Question: For each experiment, does the paper provide sufficient information on the computer resources (type of compute workers, memory, time of execution) needed to reproduce the experiments?
    \item[] Answer: \answerYes{} 
    \item[] Justification: All experiments can be performed on a personal computer within seconds. No specialist compute resources are required to reproduce experiments.
    \item[] Guidelines:
    \begin{itemize}
        \item The answer NA means that the paper does not include experiments.
        \item The paper should indicate the type of compute workers CPU or GPU, internal cluster, or cloud provider, including relevant memory and storage.
        \item The paper should provide the amount of compute required for each of the individual experimental runs as well as estimate the total compute. 
        \item The paper should disclose whether the full research project required more compute than the experiments reported in the paper (e.g., preliminary or failed experiments that didn't make it into the paper). 
    \end{itemize}
    
\item {\bf Code of ethics}
    \item[] Question: Does the research conducted in the paper conform, in every respect, with the NeurIPS Code of Ethics \url{https://neurips.cc/public/EthicsGuidelines}?
    \item[] Answer: \answerYes{} 
    \item[] Justification: All research was conducted in accordance with ethics guidelines.
    \item[] Guidelines:
    \begin{itemize}
        \item The answer NA means that the authors have not reviewed the NeurIPS Code of Ethics.
        \item If the authors answer No, they should explain the special circumstances that require a deviation from the Code of Ethics.
        \item The authors should make sure to preserve anonymity (e.g., if there is a special consideration due to laws or regulations in their jurisdiction).
    \end{itemize}

\item {\bf Broader impacts}
    \item[] Question: Does the paper discuss both potential positive societal impacts and negative societal impacts of the work performed?
    \item[] Answer: \answerYes{} 
    \item[] Justification: Broader impacts for humanity including safety, alignment, and the future role of AI in science are discussed in introduction.
    \item[] Guidelines:
    \begin{itemize}
        \item The answer NA means that there is no societal impact of the work performed.
        \item If the authors answer NA or No, they should explain why their work has no societal impact or why the paper does not address societal impact.
        \item Examples of negative societal impacts include potential malicious or unintended uses (e.g., disinformation, generating fake profiles, surveillance), fairness considerations (e.g., deployment of technologies that could make decisions that unfairly impact specific groups), privacy considerations, and security considerations.
        \item The conference expects that many papers will be foundational research and not tied to particular applications, let alone deployments. However, if there is a direct path to any negative applications, the authors should point it out. For example, it is legitimate to point out that an improvement in the quality of generative models could be used to generate deepfakes for disinformation. On the other hand, it is not needed to point out that a generic algorithm for optimizing neural networks could enable people to train models that generate Deepfakes faster.
        \item The authors should consider possible harms that could arise when the technology is being used as intended and functioning correctly, harms that could arise when the technology is being used as intended but gives incorrect results, and harms following from (intentional or unintentional) misuse of the technology.
        \item If there are negative societal impacts, the authors could also discuss possible mitigation strategies (e.g., gated release of models, providing defenses in addition to attacks, mechanisms for monitoring misuse, mechanisms to monitor how a system learns from feedback over time, improving the efficiency and accessibility of ML).
    \end{itemize}
    
\item {\bf Safeguards}
    \item[] Question: Does the paper describe safeguards that have been put in place for responsible release of data or models that have a high risk for misuse (e.g., pretrained language models, image generators, or scraped datasets)?
    \item[] Answer: \answerNA{} 
    \item[] Justification: NA
    \item[] Guidelines:
    \begin{itemize}
        \item The answer NA means that the paper poses no such risks.
        \item Released models that have a high risk for misuse or dual-use should be released with necessary safeguards to allow for controlled use of the model, for example by requiring that users adhere to usage guidelines or restrictions to access the model or implementing safety filters. 
        \item Datasets that have been scraped from the Internet could pose safety risks. The authors should describe how they avoided releasing unsafe images.
        \item We recognize that providing effective safeguards is challenging, and many papers do not require this, but we encourage authors to take this into account and make a best faith effort.
    \end{itemize}

\item {\bf Licenses for existing assets}
    \item[] Question: Are the creators or original owners of assets (e.g., code, data, models), used in the paper, properly credited and are the license and terms of use explicitly mentioned and properly respected?
    \item[] Answer: \answerYes{} 
    \item[] Justification: All embeddings / acivations studied are from either open source models or our use falls within their license terms.
    \item[] Guidelines:
    \begin{itemize}
        \item The answer NA means that the paper does not use existing assets.
        \item The authors should cite the original paper that produced the code package or dataset.
        \item The authors should state which version of the asset is used and, if possible, include a URL.
        \item The name of the license (e.g., CC-BY 4.0) should be included for each asset.
        \item For scraped data from a particular source (e.g., website), the copyright and terms of service of that source should be provided.
        \item If assets are released, the license, copyright information, and terms of use in the package should be provided. For popular datasets, \url{paperswithcode.com/datasets} has curated licenses for some datasets. Their licensing guide can help determine the license of a dataset.
        \item For existing datasets that are re-packaged, both the original license and the license of the derived asset (if it has changed) should be provided.
        \item If this information is not available online, the authors are encouraged to reach out to the asset's creators.
    \end{itemize}

\item {\bf New assets}
    \item[] Question: Are new assets introduced in the paper well documented and is the documentation provided alongside the assets?
    \item[] Answer: \answerNA{} 
    \item[] Justification: NA
    \item[] Guidelines:
    \begin{itemize}
        \item The answer NA means that the paper does not release new assets.
        \item Researchers should communicate the details of the dataset/code/model as part of their submissions via structured templates. This includes details about training, license, limitations, etc. 
        \item The paper should discuss whether and how consent was obtained from people whose asset is used.
        \item At submission time, remember to anonymize your assets (if applicable). You can either create an anonymized URL or include an anonymized zip file.
    \end{itemize}

\item {\bf Crowdsourcing and research with human subjects}
    \item[] Question: For crowdsourcing experiments and research with human subjects, does the paper include the full text of instructions given to participants and screenshots, if applicable, as well as details about compensation (if any)? 
    \item[] Answer: \answerNA{} 
    \item[] Justification: NA
    \item[] Guidelines:
    \begin{itemize}
        \item The answer NA means that the paper does not involve crowdsourcing nor research with human subjects.
        \item Including this information in the supplemental material is fine, but if the main contribution of the paper involves human subjects, then as much detail as possible should be included in the main paper. 
        \item According to the NeurIPS Code of Ethics, workers involved in data collection, curation, or other labor should be paid at least the minimum wage in the country of the data collector. 
    \end{itemize}

\item {\bf Institutional review board (IRB) approvals or equivalent for research with human subjects}
    \item[] Question: Does the paper describe potential risks incurred by study participants, whether such risks were disclosed to the subjects, and whether Institutional Review Board (IRB) approvals (or an equivalent approval/review based on the requirements of your country or institution) were obtained?
    \item[] Answer: \answerNA{} 
    \item[] Justification: NA
    \item[] Guidelines:
    \begin{itemize}
        \item The answer NA means that the paper does not involve crowdsourcing nor research with human subjects.
        \item Depending on the country in which research is conducted, IRB approval (or equivalent) may be required for any human subjects research. If you obtained IRB approval, you should clearly state this in the paper. 
        \item We recognize that the procedures for this may vary significantly between institutions and locations, and we expect authors to adhere to the NeurIPS Code of Ethics and the guidelines for their institution. 
        \item For initial submissions, do not include any information that would break anonymity (if applicable), such as the institution conducting the review.
    \end{itemize}

\item {\bf Declaration of LLM usage}
    \item[] Question: Does the paper describe the usage of LLMs if it is an important, original, or non-standard component of the core methods in this research? Note that if the LLM is used only for writing, editing, or formatting purposes and does not impact the core methodology, scientific rigorousness, or originality of the research, declaration is not required.
    \item[] Answer: \answerNA{} 
    \item[] Justification: NA
    \item[] Guidelines:
    \begin{itemize}
        \item The answer NA means that the core method development in this research does not involve LLMs as any important, original, or non-standard components.
        \item Please refer to our LLM policy (\url{https://neurips.cc/Conferences/2025/LLM}) for what should or should not be described.
    \end{itemize}

\end{enumerate}

\fi

\end{document}